\documentclass{article}

\usepackage[preprint]{neurips_2026}

\usepackage[utf8]{inputenc}
\usepackage[T1]{fontenc}
\usepackage{hyperref}
\usepackage{url}
\usepackage{booktabs}
\usepackage{amsfonts}
\usepackage{amsmath}
\usepackage{amssymb}
\usepackage{mathtools}
\usepackage{amsthm}
\usepackage{nicefrac}
\usepackage{microtype}
\usepackage{xcolor}
\usepackage{graphicx}
\usepackage{subcaption}
\usepackage{enumitem}
\usepackage{algorithm}
\usepackage{wrapfig}
\usepackage{algorithmic}
\usepackage{tikz}
\usetikzlibrary{arrows.meta,decorations.pathreplacing,calc,positioning,shapes.geometric}
\usepackage{thmtools}
\usepackage{xcolor}

\theoremstyle{plain}
\newtheorem{theorem}{Theorem}[section]
\newtheorem{proposition}[theorem]{Proposition}
\newtheorem{lemma}[theorem]{Lemma}

\theoremstyle{definition}
\newtheorem{definition}[theorem]{Definition}
\newtheorem{assumption}[theorem]{Assumption}
\theoremstyle{remark}

\newcommand{\R}{\mathbb{R}}
\newcommand{\E}{\mathbb{E}}

\newcommand{\cR}{\mathcal{R}}

\newcommand{\fnorm}[1]{\|#1\|_F}
\newcommand{\opnorm}[1]{\|#1\|_{\mathrm{op}}}
\newcommand{\nucnorm}[1]{\|#1\|_*}
\newcommand{\kfnorm}[2]{\|#1\|_{\mathrm{KF},#2}}
\newcommand{\dualnorm}[1]{\|#1\|_{(r)}}

\DeclareMathOperator{\col}{col}
\DeclareMathOperator{\orth}{orth}
\DeclareMathOperator{\ColNorm}{ColNorm}
\DeclareMathOperator{\diag}{diag}
\DeclareMathOperator{\erank}{erank}
\DeclareMathOperator{\clip}{clip}

\newcommand{\ip}[2]{\langle #1, #2 \rangle}

\DeclareMathOperator*{\argmax}{arg\,max}

\title{Orth-Dion: Eliminating Geometric Mismatch in Distributed Low-Rank Spectral Optimization}

\author{%
{\small
\begin{tabular}{@{}c@{\hspace{0.4em}}c@{\hspace{0.4em}}c@{\hspace{0.4em}}c@{}}
Tatsuhiro Nakamori\textsuperscript{1}\thanks{Equal contribution: \url{tatsuhironm@keio.jp, lpgomez@stanford.edu}}&
Laura Gomezjurado Gonzalez\textsuperscript{2*}&
Ganesh Talluri\textsuperscript{3} &
Ansh Tiwari\textsuperscript{4}
\\
Hideyuki Kawashima\textsuperscript{1}&
Ioannis Mitliagkas\textsuperscript{5,6}&
Guillaume Rabusseau\textsuperscript{5,6}&
Hiroki Naganuma\textsuperscript{5,6}\thanks{Corresponding author: \url{hiroki11x@gmail.com}}
\\[0.55em]
\multicolumn{4}{c}{\footnotesize
\textsuperscript{1}Keio University
\quad
\textsuperscript{2}Stanford University
\quad
\textsuperscript{3}Midwestern University
\quad
\textsuperscript{4}California Institute of Technology}
\\
\multicolumn{4}{c}{\footnotesize
\textsuperscript{5}Mila
\quad
\textsuperscript{6}Universit\'{e} de Montr\'{e}al}
\end{tabular}
}
}

\begin{document}

\maketitle


\begin{abstract}
Low-rank gradient compression reduces communication in distributed training by representing updates with rank-$r$ factors. Dion is a recent method that approximates Muon, a spectral optimizer that orthogonalizes momentum, using one step of power iteration followed by column normalization (rescaling each column of the right factor to unit length). This makes it compatible with fully sharded data parallel training, but it converges more slowly than full-rank spectral methods. We show that this gap is geometric: column normalization does not yield the rank-$r$ polar factor that Muon implicitly targets, so the resulting direction violates the dual-norm constraint of the low-rank spectral geometry, and the rate picks up an extra factor of $\sqrt{r}$ even though the low-rank approximation of the gradient itself is accurate. The same mismatch enters the smoothness term and the error-feedback recursion in the analysis, which has a knock-on effect on empirical performance. We propose Orth-Dion, which replaces column normalization with QR orthogonalization of the right factor. Under non-Euclidean smoothness, with $L_r$ the curvature constant along rank-$r$ directions, Orth-Dion attains rate $O(\sqrt{L_r/T})$, matching exact spectral methods at the same per-step communication cost as Dion. The proof removes the bounded-drift assumption common in prior error-feedback analyses via a self-consistent fixed-point argument, and uses a time-averaged contraction that only requires the error sequence to contract on average rather than at every step. Experiments on large-scale language model pre-training validate the predicted $\sqrt{r}$ scaling and show that Orth-Dion closes the convergence gap to Muon at Dion's communication cost.
\end{abstract}









\section{Introduction}
\label{sec:intro}

Training large language models increasingly relies on distributed sharding schemes such as fully sharded data parallelism~\citep{zhao2023pytorch}, where optimizer design is constrained by both per-step computation and communication. Spectral optimizers such as Muon~\citep{jordan2024muon} are attractive in this setting because they exploit the matrix structure of neural network parameters. Rather than applying an entrywise update, they orthogonalize momentum and take a step in a spectrally normalized direction. This geometry has been empirically effective, but it is difficult to scale under sharding, since forming and orthogonalizing full momentum matrices requires additional collective communication. Dion~\citep{ahn2025dion} addresses this bottleneck by replacing the full spectral update with a low-rank factorized one. Each matrix update is represented through rank-$r$ factors, which are computed by one step of power iteration and communicated at cost proportional to $(m+n)r$ rather than $mn$.

At first glance, the remaining gap between Dion and full-rank spectral methods appears to be the unavoidable price of using rank-$r$ updates~\citep{ahn2025dion, carlson2015preconditioned}.
Since Dion communicates only low-rank factors, one might expect its slower convergence to come from missing gradient directions outside the selected subspace.
This interpretation is natural given the low-rank, power-iteration-based compression lineage of PowerSGD and Dion~\citep{vogels2019powersgd,ahn2025dion}. Under this view, the problem is where the update points, and Dion loses because its rank-$r$ subspace is too small.
We show that this explanation is incomplete. Dion's power iteration can identify a useful low-rank subspace, yet the subsequent update can still be shaped incorrectly inside that subspace.
The source is Dion's final normalization step. After power iteration, Dion normalizes the columns of the right factor independently~\citep{ahn2025dion}. This preserves the span of the factor, but it does not produce the orthogonal right factor associated with a rank-$r$ spectral update~\citep{carlson2015preconditioned,bernstein2024oldoptimizer}. As a result, Dion can move in essentially the same subspace as the desired low-rank polar direction while using a direction that is incorrectly scaled for spectral descent. We formalize this mismatch through a dual-norm factor $\nu_t$, showing that column normalization can introduce a rank-dependent penalty even when the low-rank approximation itself is accurate.

Orth-Dion fixes this mismatch with the smallest possible change. It keeps Dion's power iteration, residual buffer, and low-rank communication pattern unchanged, but replaces column normalization with QR orthogonalization of the right factor. This makes the update geometrically aligned with the rank-$r$ polar direction targeted by spectral methods. Under the same low-rank communication cost as Dion, Orth-Dion removes the rank-dependent normalization penalty and recovers the leading convergence rate of exact rank-$r$ spectral updates. Because Dion-family methods use error feedback, the proof must control the coupled evolution of the residual buffer and the tracked subspace. We do this through a self-consistent residual-control argument and an amortized contraction condition, avoiding the bounded-buffer-drift assumption used in prior analyses.

We measure $\nu_t$ directly during Llama 3 320M pretraining. Dion's $\nu_t$ rises with rank across every layer and logged step; Orth-Dion's holds at $\nu_t \approx 1$. The mismatch is not a worst-case anomaly; it is what Dion does in the regime it is meant for. Holding $r$ fixed and swapping column normalization for QR lowers validation loss at every rank we tested, which separates the QR fix from any rank-capacity effect. Because QR adds per-step compute, we pair it with adaptive rank to obtain Ada-Orth-Dion: the working rank shrinks toward each layer's intrinsic dimensionality, recovering Dion-like step time on a 17.1B model (Figure~\ref{fig:fig1}) while maintaining convergence gain. The combined method gives a low-rank distributed spectral optimizer with a better convergence--communication trade-off than Dion.






    

\begin{figure*}[t]
  \centering
  \begin{subfigure}[b]{0.41\textwidth}
    \centering
    \includegraphics[width=\linewidth]{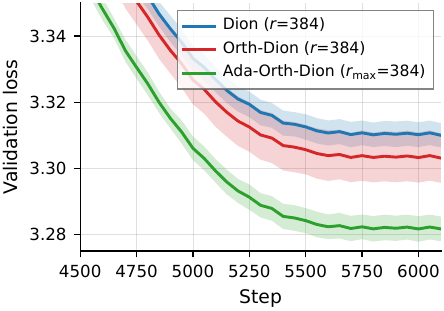}
    \caption{\footnotesize \textbf{Lower validation loss at matched rank.}
      LLaMA 320M, late-stage convergence (mean$\pm$2 std, full trajectory in Fig.~\ref{fig:main_4shards}).}
    \label{fig:fig1_val_loss}
  \end{subfigure}\hfill
  \begin{subfigure}[b]{0.57\textwidth}
    \centering
    \includegraphics[width=\linewidth]{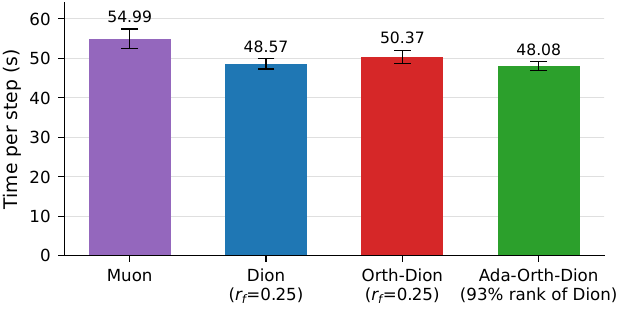}
    \caption{\footnotesize \textbf{Faster wallclock at scale.}
      Per-step time on a LLaMA 17.1B 50-step run (mean$\pm$2 std). Rank fraction $r_f$ is the spectral update's rank divided by the parameter matrix's full rank.}
    \label{fig:fig1_wallclock}
  \end{subfigure}
  \caption{\footnotesize \textbf{Proposed methods improve both convergence and wallclock.} \textbf{(a)}~At matched rank, Orth-Dion and Ada-Orth-Dion reach Dion's plateau earlier and keep going. \textbf{(b)}~Adaptive rank absorbs Orth-Dion's QR overhead and matches Dion's per-step time on the 17.1B model. (Ada-Orth-Dion's rank is pinned at $0.93\,r_f$, the steady-state rank reached on the 320M model; see App.~\ref{app:llm_details:wide}.)}
  \label{fig:fig1}
\end{figure*}

\paragraph{Contributions.}
We (i)~identify a geometric source of suboptimality in Dion: column normalization inflates the Ky Fan dual norm by up to $\sqrt{r}$, producing a rank-dependent convergence penalty distinct from low-rank approximation error; (ii)~propose \textbf{Orth-Dion}, a one-line ColNorm$\to$QR replacement that forces $\nu_t{=}1$ and recovers the exact low-rank spectral rate at Dion's communication cost; (iii)~prove convergence under non-Euclidean smoothness via a self-consistent residual-control argument that eliminates bounded-buffer-drift and allows amortized rather than per-step contraction; and (iv)~validate the mechanism and optimizer-level impact through direct $\nu_t$ measurements, matched-rank improvements on Llama~3 320M, and Ada-Orth-Dion matching Dion's wall-clock time at lower validation loss.

\section{Background: Geometry of Rank-Constrained Optimization}
\label{sec:geometry}

\textbf{Steepest descent under norms.}\;
Given $f:\R^{m\times n}\to\R$, steepest descent under norm $\|\cdot\|$ updates $X_{t+1} = X_t - \eta D_t^*$ with $D_t^* = \argmax_{\|D\|\leq 1}\ip{\nabla f(X_t)}{D}$.
SGD uses $\ell_2$; SignSGD~\citep{bernstein2018signsgd} uses $\ell_\infty$; spectral methods use the operator norm.
For distributed training, we constrain updates to rank $\leq r$, yielding the \emph{Ky Fan $r$-norm} geometry. The Ky Fan r-norm is the dual of the sum of the top-r singular values, and can be written as
$D_t^* = \argmax_{\dualnorm{D}\leq 1} \ip{M_t}{D}$,
where $\dualnorm{D} = \max\{\sigma_1(D),\, \fnorm{D}/\sqrt{r}\}$.
The solution is the rank-$r$ polar factor $P_r(M_t) = U_r V_r^\top$ from the truncated SVD, achieving $\ip{M_t}{D_t^*} = \kfnorm{M_t}{r} = \sum_{i=1}^r \sigma_i(M_t)$.

\textbf{The two algorithms.}\;
Both maintain buffer $M_t = G_t + R_t$ and use one power iteration step.
The \emph{sole difference} is the right-factor normalization:
\begin{center}
\vspace{-0.3em}
\small
\begin{tabular}{lll}
& \textbf{Stripped Dion}\footnotemark & \textbf{Orth-Dion (Proposed)} \\

Line 5: & $\bar{V}_t \leftarrow \ColNorm(W_t)$ & $\bar{V}_t \leftarrow \orth(W_t)$ \quad\textit{(QR)}
\end{tabular}
\vspace{-0.3em}
\end{center}
\footnotetext{\emph{Stripped Dion}: Dion's spectral update without the auxiliary scalar/Adam-side machinery (decoupled scalar LR, output-head LR scaling). Theorem~\ref{thm:orth_dion} uses this form, but the QR correction is spectral-side only and transfers to full Dion, which all experiments use as the baseline (auxiliary machinery enabled for both); reported $\nu_t$ (Fig.~\ref{fig:sqrt_r_scaling}) and matched-rank gains (Table~\ref{tab:llm_main}) are in this setting.}

where $W_t = M_t^\top U_t$ and $U_t = \orth(M_t V_{t-1})$.
The update is $\hat{D}_t = U_t \bar{V}_t^\top$, and the error feedback is $R_{t+1} = \beta\,(I - U_t U_t^\top)\, M_t$.
QR costs $O(nr^2)$ which is negligible compared to the $O(mnr)$ power step since $r \ll m$.
The full procedure is given in Algorithm~\ref{alg:orth_dion}.

\begin{algorithm}[t]
\caption{Orth-Dion / Stripped Dion (one step)}
\label{alg:orth_dion}
\begin{algorithmic}[1]
\REQUIRE Gradient $G_t \in \R^{m \times n}$, residual $R_t$, right factor $V_{t-1}$, step size $\eta$, EF coefficient $\beta$
\STATE $M_t \leftarrow G_t + R_t$ \hfill $\triangleright$ Buffer = gradient + error feedback
\STATE $U_t \leftarrow \orth(M_t V_{t-1})$ \hfill $\triangleright$ Left factor via QR, $O(mr^2)$
\STATE $W_t \leftarrow M_t^\top U_t$ \hfill $\triangleright$ Right factor (un-normalized), $O(mnr)$
\STATE $\bar{V}_t \leftarrow \orth(W_t)$ \hfill $\triangleright$ \textbf{Orth-Dion}: QR; \textit{Dion}: $\ColNorm(W_t)$
\STATE $\hat{D}_t \leftarrow U_t \bar{V}_t^\top$ \hfill $\triangleright$ Low-rank update direction
\STATE $X_{t+1} \leftarrow X_t - \eta \hat{D}_t$ \hfill $\triangleright$ Parameter update
\STATE $R_{t+1} \leftarrow \beta\,(M_t - U_t (U_t^\top M_t))$ \hfill $\triangleright$ Error feedback
\STATE $V_t \leftarrow \bar{V}_t$ \hfill $\triangleright$ Warm-start for next step
\end{algorithmic}
\end{algorithm}

\textbf{FSDP communication.}\;
Under FSDP, parameters are sharded across devices.
Each training step involves: (1)~all-gather to materialize full weights, (2)~forward/backward pass, (3)~reduce-scatter to re-shard gradients, (4)~local optimizer step.
Muon requires an \emph{additional} all-gather/reduce-scatter pair for the full momentum matrix during orthogonalization.
Dion and Orth-Dion avoid this by communicating only the low-rank factors ($O((m{+}n)r)$ vs.\ $O(mn)$), making them compatible with FSDP sharding at rank-proportional cost.







\textbf{Assumptions.}\;
We use the following assumptions (details in Appendix~\ref{app:assumptions}):
\textbf{(A)}~\emph{Non-Euclidean smoothness}: $f(X{+}\Delta) \leq f(X) + \ip{\nabla f(X)}{\Delta} + \frac{L_r}{2}\dualnorm{\Delta}^2$, where $L_r$ captures curvature along low-rank directions;
\textbf{(A$'$)}~\emph{Frobenius gradient-Lipschitz}: $\fnorm{\nabla f(Y)-\nabla f(X)} \leq L_F\fnorm{Y-X}$ (independent of A in non-Euclidean geometry);
\textbf{(B$'$)}~\emph{Gradient spectral gap}: $\sigma_r(G_t) - \sigma_{r+1}(G_t) \geq \Delta_{\mathrm{gap}} > 0$;
\textbf{(B$''$)}~\emph{Small spectral tail}: $\sigma_{r+1}(G_t) \leq \tau$ with $\tau = O(\eta)$ on the relevant horizon;
\textbf{(C$'$)}~\emph{Gradient bounds}: $\fnorm{G_t} \leq G_F$, $\kappa_r(G_t) \leq \kappa_G$.

\textbf{One-step descent decomposition.}\;
Under Assumption~A, any update $\hat{D}_t$ with $\dualnorm{\hat{D}_t} = \nu_t$ yields (proof in Appendix~\ref{app:one_step}):
\begin{equation}
\label{eq:one_step}
f(X_{t+1}) \leq f(X_t) - \eta\kfnorm{G_t}{r} + \eta\big(\delta_t + (1{+}\nu_t)\kfnorm{R_t}{r}\big) + \tfrac{L_r \nu_t^2}{2}\eta^2,
\end{equation}
where $\delta_t = \kfnorm{M_t}{r} - \ip{M_t}{\hat{D}_t}$ is the oracle defect.
The dual norm factor $\nu_t$ appears in two critical places: the smoothness penalty $L_r\nu_t^2\eta^2/2$ and the residual coupling $(1{+}\nu_t)\kfnorm{R_t}{r}$. 


\section{\texorpdfstring{The $\sqrt{r}$ Inefficiency}{The sqrt(r) Inefficiency}}
\label{sec:sqrt_r}

Column normalization produces $\bar{V}_t$ with unit-length columns, giving Gram matrix $\bar{V}_t^\top\bar{V}_t$ with diagonal entries 1 and potentially large off-diagonal entries.
Since this is a correlation matrix:

\begin{proposition}[$\sqrt{r}$ bound]
\label{prop:colnorm_sqrt_r}
For column-normalized $\bar{V}_t$: $\nu_t = \dualnorm{\hat{D}_t} = \opnorm{\bar{V}_t} \in [1, \sqrt{r}]$.
Equality $\nu_t{=}1$ holds only when columns of $W_t$ are already orthogonal.
\end{proposition}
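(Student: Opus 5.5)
The plan is to compute both quantities in the definition $\dualnorm{\hat D_t} = \max\{\sigma_1(\hat D_t),\ \fnorm{\hat D_t}/\sqrt r\}$ directly. I will use two facts about the factors: $U_t = \orth(M_t V_{t-1})$ has orthonormal columns, so $U_t^\top U_t = I_r$, and $\bar V_t = \ColNorm(W_t)$ has unit-length columns. I will assume, as the algorithm implicitly does, that every column of $W_t$ is nonzero, so that $\ColNorm$ is well defined.

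First, I will strip off the left factor. Since $U_t$ is an isometry on $\R^r$, we have
\[
\hat D_t^\top \hat D_t = \bar V_t U_t^\top U_t \bar V_t^\top = \bar V_t \bar V_t^\top .
\]
So $\hat D_t$ and $\bar V_t^\top$ have the same singular values. This gives $\sigma_1(\hat D_t)=\opnorm{\bar V_t}$ and $\fnorm{\hat D_t}=\fnorm{\bar V_t}$. Because each of the $r$ columns has unit norm, $\fnorm{\bar V_t}^2=r$, and hence $\fnorm{\hat D_t}/\sqrt r = 1$.

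Next, I will work with the Gram matrix $C=\bar V_t^\top\bar V_t$. It is positive semidefinite with unit diagonal, so $\operatorname{tr} C = r$, and $\opnorm{\bar V_t}^2=\lambda_{\max}(C)$. Since $\lambda_{\max}(C)$ is at least the average eigenvalue, $\lambda_{\max}(C)\ge \operatorname{tr}C/r = 1$. Since all eigenvalues are nonnegative, $\lambda_{\max}(C)\le \operatorname{tr} C = r$. Therefore $\opnorm{\bar V_t}\in[1,\sqrt r]$. The operator-norm term dominates the Frobenius term, which equals $1$, so $\nu_t=\max\{\opnorm{\bar V_t},1\}=\opnorm{\bar V_t}$.

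The equality case is the only step needing care, and I will argue it by eigenvalue averaging. Suppose $\lambda_{\max}(C)=1$. The eigenvalues of $C$ sum to $r$ and are each at most $1$, so all of them equal $1$. A symmetric matrix whose eigenvalues all equal $1$ is $C=I_r$, meaning the columns of $\bar V_t$ are orthonormal. Column normalization only rescales each column by a positive scalar, so the columns of $\bar V_t$ are orthogonal exactly when the columns of $W_t$ are. Conversely, if the columns of $W_t$ are orthogonal, then $C=I$ and $\nu_t=1$. As a side remark, the upper endpoint $\sqrt r$ is attained exactly when $C$ has rank one, that is, when all columns of $W_t$ are parallel, which shows the interval is tight.
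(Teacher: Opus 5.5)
Your proof is correct and is essentially the paper's argument made explicit. The paper justifies the proposition only by noting that $\bar V_t^\top\bar V_t$ is a correlation matrix (PSD with unit diagonal), so its top eigenvalue lies in $[1,r]$; your reduction $\hat D_t^\top\hat D_t=\bar V_t\bar V_t^\top$ is the same computation the paper uses for Lemma~\ref{lem:partial_isometry}, and your conclusion $\nu_t=\opnorm{\bar V_t}$ also holds under the appendix's corrected definition $\dualnorm{A}=\max\{\sigma_1(A),\nucnorm{A}/r\}$, because $\nucnorm{\hat D_t}\le\sqrt{r}\,\fnorm{\hat D_t}=r$ keeps the second term at most $1$.
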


\textbf{How $\nu_t$ enters the rate.}\;
Telescoping~\eqref{eq:one_step} and optimizing $\eta$:
\begin{equation}
\label{eq:rate_with_nu}
\min_t \kfnorm{G_t}{r} \leq \sqrt{\tfrac{2(f_0{-}f_\infty)L_r \nu^2}{T}} + O(1/T), \quad \nu = \max_t \nu_t.
\end{equation}
With $\nu{=}\sqrt{r}$ (ColNorm): rate $= O(\sqrt{L_r r/T})$.
With $\nu{=}1$ (Orth-Dion): rate $= O(\sqrt{L_r/T})$.
The $\sqrt{r}$ gap is purely geometric.

\textbf{Geometric interpretation.}\;
The Ky Fan dual ball $\{D:\dualnorm{D}{\leq}1\}$ constrains both $\opnorm{D}{\leq}1$ and $\fnorm{D}{\leq}\sqrt{r}$.
The polar factor $P_r(M_t)$ lies on its boundary ($\opnorm{\cdot}{=}1$, $\fnorm{\cdot}{=}\sqrt{r}$).
ColNorm preserves $\fnorm{\cdot}{=}\sqrt{r}$ but allows $\opnorm{\cdot}$ up to $\sqrt{r}$---pushing the update \emph{outside} the dual ball and incurring a factor-$r$ smoothness penalty.
The operator norm constraint $\opnorm{\hat{D}_t}{\leq}1$ is the binding constraint for any rank-$r$ partial isometry.
ColNorm violates it by concentrating singular values non-uniformly ($\sigma_{\max}(\hat{D}_t) \gg \sigma_{\min}(\hat{D}_t)$); QR orthogonalization forces all singular values to 1 by construction.
The key insight: the inefficiency is not in \emph{which subspace} is selected (Lemma~\ref{lem:same_tracking} shows both algorithms select the same subspace) but in \emph{how the update is scaled within that subspace}.

\section{Orth-Dion: Correcting the Geometry}
\label{sec:orth_dion}

\begin{lemma}[Partial isometry, $\nu_t{=}1$]
\label{lem:partial_isometry}
If $U{\in}\R^{m\times r}$, $\bar{V}{\in}\R^{n\times r}$ both have orthonormal columns, then $\hat{D}{=}U\bar{V}^\top$ has $\dualnorm{\hat{D}}{=}1$.
\end{lemma}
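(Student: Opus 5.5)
The plan is to compute the singular values of $\hat{D} = U\bar{V}^\top$ directly and then read off both terms in the definition $\dualnorm{D} = \max\{\sigma_1(D), \fnorm{D}/\sqrt{r}\}$ from Section~\ref{sec:geometry}. The key identity is that orthonormal columns give $U^\top U = I_r$ and $\bar{V}^\top\bar{V} = I_r$.

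First, form the Gram matrix
\[
\hat{D}^\top\hat{D} = \bar{V}U^\top U\bar{V}^\top = \bar{V}\bar{V}^\top .
\]
This is the orthogonal projector onto $\col(\bar{V})$, a subspace of dimension exactly $r$ because $\bar{V}$ has $r$ orthonormal columns. Its eigenvalues are therefore $1$ with multiplicity $r$ and $0$ otherwise. Equivalently, $U\bar{V}^\top$ is already a thin SVD with all $r$ singular values equal to $1$. Hence $\sigma_1(\hat{D}) = \opnorm{\hat{D}} = 1$ and $\fnorm{\hat{D}}^2 = \operatorname{tr}(\bar{V}\bar{V}^\top) = \operatorname{tr}(\bar{V}^\top\bar{V}) = r$.

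Substituting into the definition gives
\[
\dualnorm{\hat{D}} = \max\{1, \sqrt{r}/\sqrt{r}\} = 1 .
\]
Both terms of the max are active, which matches the remark in Section~\ref{sec:sqrt_r} that the polar factor sits on the boundary of the dual ball in both constraints.

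There is no real obstacle. The only point needing care is the implicit requirement that $U$ and $\bar{V}$ have exactly $r$ columns with full column rank, so the rank is exactly $r$ and the Frobenius norm equals $\sqrt{r}$. If the QR step in Algorithm~\ref{alg:orth_dion} returned fewer nonzero orthonormal columns, say $k<r$, we would instead get $\fnorm{\hat{D}} = \sqrt{k}$. The dual norm would still equal $1$, because $\sigma_1 = 1$ dominates, so the conclusion $\nu_t = 1$ survives. I would note this in one line.
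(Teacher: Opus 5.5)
Your singular-value computation is the paper's argument: $\hat D^\top\hat D=\bar V U^\top U\bar V^\top=\bar V\bar V^\top$ is a rank-$r$ orthogonal projector, so $\hat D$ has $r$ singular values equal to $1$ and the rest $0$. The gap is in the final substitution. You plug into $\max\{\sigma_1(D),\fnorm{D}/\sqrt{r}\}$, the formula written in Section~\ref{sec:geometry}. The paper's own table of norms in Appendix~\ref{app:norms} flags that formula as wrong. The dual of the Ky Fan $r$-norm is $\max\{\sigma_1(D),\nucnorm{D}/r\}$, and the Frobenius version already fails at $r=2$, $D=I_3$, giving $\sqrt{3/2}$ where the true value is $3/2$.

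The Frobenius expression is in fact never larger than the true dual norm. After scaling so that $\sigma_1(D)=1$, you have $\fnorm{D}^2=\sum_i\sigma_i^2\le\sum_i\sigma_i=\nucnorm{D}$. Hence $\fnorm{D}/\sqrt{r}\le\sqrt{\nucnorm{D}/r}\le\max\{1,\nucnorm{D}/r\}$. So your substitution only certifies $\dualnorm{\hat D}\ge 1$. It says nothing about $\dualnorm{\hat D}\le 1$, and that is the direction the paper actually uses: the H\"older step in Lemma~\ref{lem:nonneg_defect} and the smoothness cost $L_r\nu_t^2\eta^2/2$.

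The repair takes one line and uses only what you already derived. Since all $r$ nonzero singular values equal $1$, $\nucnorm{\hat D}=r$, so $\dualnorm{\hat D}=\max\{1,r/r\}=1$; this is exactly the paper's proof.

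If you would rather not rely on any closed form for the dual norm, argue from the definition. For every $B$ with $\kfnorm{B}{r}\le 1$,
\[
\ip{\hat D}{B}=\operatorname{tr}(U^\top B\bar V)\le\sum_{i=1}^r\sigma_i(U^\top B\bar V)\le\sum_{i=1}^r\sigma_i(B)\le 1.
\]
This holds because $U^\top B\bar V$ is $r\times r$, and compressing by matrices with orthonormal columns cannot increase singular values. Equality is attained at $B=\hat D/r$, which has $\kfnorm{B}{r}=1$ and $\ip{\hat D}{B}=\fnorm{\hat D}^2/r=1$.

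Your side remark about only $k<r$ nonzero columns also survives under the corrected formula. In that case $\nucnorm{\hat D}/r=k/r<1$, and $\sigma_1(\hat D)=1$ still determines the maximum.
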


\begin{lemma}[Non-negative defect]
\label{lem:nonneg_defect}
For Orth-Dion, $\delta_t \geq 0$ for all $t$.
\emph{(By H\"older: $\ip{M_t}{\hat{D}_t} \leq \kfnorm{M_t}{r}\cdot 1$.)}
\end{lemma}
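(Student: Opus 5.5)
The plan is to show that $\ip{M_t}{\hat{D}_t} \le \kfnorm{M_t}{r}$, which is exactly $\delta_t \ge 0$ by the definition $\delta_t = \kfnorm{M_t}{r} - \ip{M_t}{\hat{D}_t}$. The natural route is duality between $\kfnorm{\cdot}{r}$ and $\dualnorm{\cdot}$, combined with Lemma~\ref{lem:partial_isometry}. For Orth-Dion, $U_t = \orth(M_t V_{t-1})$ and $\bar{V}_t = \orth(W_t)$ both have orthonormal columns by construction of QR. Hence Lemma~\ref{lem:partial_isometry} gives $\dualnorm{\hat{D}_t} = 1$.

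The second step is the H\"older-type inequality $\ip{A}{D} \le \kfnorm{A}{r}\,\dualnorm{D}$. I would not simply cite it; I would verify it directly, because the background section only describes $\dualnorm{\cdot}$ informally. Use von Neumann's trace inequality, $\ip{A}{D} \le \sum_i \sigma_i(A)\sigma_i(D)$. Write $a_i = \sigma_i(A)$, sorted in decreasing order, and $d_i = \sigma_i(D)$, which satisfy $d_i \le \opnorm{D}$ and $\sum_i d_i^2 \le \fnorm{D}^2$.

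In our case $\hat{D}_t = U_t\bar{V}_t^\top$ is a rank-$r$ partial isometry. Its singular values are therefore exactly $1$ (with multiplicity $r$) and $0$ otherwise, so von Neumann gives directly
\[
\ip{M_t}{\hat{D}_t} \le \sum_{i=1}^r \sigma_i(M_t) = \kfnorm{M_t}{r}.
\]
This sidesteps the general dual-norm computation entirely, since only the partial-isometry case is needed. If one wants the general statement, the bound is the linear program of maximizing $\sum_i a_i d_i$ subject to $0 \le d_i \le 1$ and $\sum_i d_i \le r$; note that $\sum_i d_i^2 \le r$ with $d_i \le 1$ does not by itself give $\sum_i d_i \le r$. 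I therefore prefer to stay with the explicit singular values of $\hat{D}_t$.

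The only delicate point is confirming that QR really yields orthonormal columns of full rank $r$. If $W_t$ or $M_tV_{t-1}$ is rank-deficient, QR still returns a matrix with orthonormal columns (thin Q). Even if some columns were zero, the singular values of $\hat{D}_t$ would lie in $\{0,1\}$ with at most $r$ ones, and the same von Neumann bound holds. So the inequality is robust. I expect no real obstacle beyond stating von Neumann's inequality cleanly.
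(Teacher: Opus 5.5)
Your proof is correct and is essentially the paper's argument: the paper invokes the H\"older inequality $\ip{M_t}{\hat D_t}\le\kfnorm{M_t}{r}\dualnorm{\hat D_t}$ together with $\dualnorm{\hat D_t}=1$ from Lemma~\ref{lem:partial_isometry}, and you prove exactly that special case by applying von Neumann's trace inequality to the explicit singular values $\{1,\dots,1,0,\dots\}$ of the partial isometry. Your decision not to rely on the body's formula $\max\{\sigma_1,\fnorm{\cdot}/\sqrt r\}$ is well founded, since the paper's appendix footnote corrects the dual norm to $\max\{\sigma_1,\nucnorm{\cdot}/r\}$, which is precisely the constraint $\sum_i d_i\le r$ you identify, and your treatment of rank-deficient QR makes the argument slightly more robust than the paper's one-line version.
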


\begin{lemma}[Same tracking]
\label{lem:same_tracking}
The projector error $\varepsilon_t$ satisfies the same recursion for both algorithms, since $\col(\orth(W)) = \col(\ColNorm(W))$ and the error feedback depends only on $U_t$.
\end{lemma}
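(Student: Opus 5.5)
The plan is to show that the two algorithms generate identical sequences $(M_t, U_t, R_t)$ whenever they start from the same data. The projector error $\varepsilon_t$, defined in terms of $U_t U_t^\top$ and the top-$r$ left singular subspace of $G_t$ (or $M_t$), then obeys literally the same recursion. The argument is an induction on $t$ in which the right-factor normalization is shown to be invisible to every quantity that feeds forward into the left factor and the residual.

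\textbf{Step 1 (column spaces agree).} Let $W \in \R^{n\times r}$. Suppose first that $W$ has full column rank. Then $\ColNorm(W) = W \Lambda^{-1}$ with $\Lambda = \diag(\|w_1\|,\dots,\|w_r\|)$ invertible. Similarly $\orth(W) = W \cR^{-1}$, where $W = QR$ is the thin QR factorization and $R$ is invertible. Both are therefore right-multiplications of $W$ by an invertible $r\times r$ matrix, so $\col(\ColNorm(W)) = \col(W) = \col(\orth(W))$. Equivalently, $\bar V^{\mathrm{QR}} = \bar V^{\mathrm{CN}} S$ for some invertible $S$. If $W$ is rank-deficient, I will use the standard convention that zero columns stay zero and QR is taken with a consistent pivoting or completion; I expect only to need the generic full-rank case, which Assumption B$'$ together with the warm start makes the relevant one. (I avoid the macro clash here: in the display above, $\cR^{-1}$ stands for the inverse of the triangular factor.)

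\textbf{Step 2 (left factor invariance).} The next left factor is $U_{t+1} = \orth(M_{t+1} V_t)$ with $V_t = \bar V_t$. Under the two algorithms, $M_{t+1}V_t^{\mathrm{QR}} = M_{t+1}V_t^{\mathrm{CN}} S$, so these matrices have the same column space. Here lies the main subtlety: $\orth$ returns a specific basis, and this basis can differ by an $r\times r$ orthogonal (triangular-sign) factor. The projector $U_{t+1}U_{t+1}^\top$ is nevertheless identical. I will therefore phrase the induction hypothesis in terms of projectors rather than matrices: $\Pi_t := U_tU_t^\top$ and $\col(V_{t-1})$ coincide for both algorithms, and $M_t$ coincides.

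\textbf{Step 3 (residual and induction).} The residual update $R_{t+1} = \beta(I-\Pi_t)M_t$ depends only on $\Pi_t$ and $M_t$, so $R_{t+1}$, and hence $M_{t+1} = G_{t+1}+R_{t+1}$, agree. Here I assume the gradient sequence $G_{t+1}$ is held fixed, i.e.\ tracking is compared along the same gradient sequence, which is how I read the statement. The next quantities follow in turn. $W_t = M_t^\top U_t$ changes only by right multiplication by an orthogonal matrix, so its column space agrees. By Step 1, $\col(\bar V_t)$ then agrees as well. Finally, by Step 2, $\Pi_{t+1}$ agrees. Since $\varepsilon_t$ is a function of $\Pi_t$ and the target subspace of $M_t$ (or $G_t$), the recursion for $\varepsilon_t$, whatever form it takes in the tracking analysis, has identical inputs for both algorithms. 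The base case holds because both algorithms share the initialization $V_{-1}$ and $R_0$.

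I expect the only real obstacle to be the basis-versus-subspace bookkeeping in Step 2: stating the invariance at the level of projectors and checking that no later quantity used in the $\varepsilon_t$ recursion depends on the particular orthonormal basis. The update $\hat D_t$ does depend on the normalization, but it does not feed back into $\varepsilon_t$.
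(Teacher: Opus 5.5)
Your Steps 1--3 are correct as a statement about the tracking \emph{update map}. Given the same $M_t$, the same projector $U_tU_t^\top$, the same $\col(V_{t-1})$ and the same next gradient, both methods produce the same $U_{t+1}U_{t+1}^\top$, $R_{t+1}$ and $\col(V_t)$. Working with projectors rather than bases is the right bookkeeping, and it is a more careful version of item (1) in the paper's proof.

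The gap is your assumption that the two methods see the same gradient sequence. In the deterministic setting being analyzed, $G_{t+1}=\nabla f(X_{t+1})$ with $X_{t+1}=X_t-\eta\hat D_t$. The step $\hat D_t$ is exactly what the normalization changes: $\opnorm{\hat D_t}=\nu_t$ equals $1$ for QR and is generally larger for ColNorm, so the two steps are different matrices. From $t=1$ on, the iterates, gradients, buffers and target subspaces of the two methods all differ. Consequently your claim that the algorithms generate identical $(M_t,U_t,R_t)$ is false. Your closing sentence is also wrong: $\hat D_t$ does feed back into $\varepsilon_t$, through $G_{t+1}$, which enters both $M_{t+1}$ and the subspace that $\varepsilon_{t+1}$ is measured against. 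The identity you prove holds only for a hypothetical open-loop process, not for the two algorithms.

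What can be proved, and what the lemma and item (3) of the paper's proof assert, is that the recursion inequality $\varepsilon_{t+1}\le\rho_t\varepsilon_t+c_t$ has the same form and constants for both methods. Your map-invariance argument covers everything except the forcing term $c_t$, which comes from gradient-subspace drift. By A$'$ and Wedin, that drift is controlled by $L_F\eta\fnorm{\hat D_t}/\Delta_{\mathrm{gap}}$.

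The missing observation is that this quantity is blind to the normalization. Because $U_t$ has orthonormal columns, $\fnorm{U_t\bar V_t^\top}^2=\mathrm{tr}(\bar V_t^\top\bar V_t)=r$. This holds both when $\bar V_t$ has unit-norm columns and when it has orthonormal columns. Note that the drift must be measured in the Frobenius norm: $\opnorm{\hat D_t}=\nu_t$ does depend on the method.

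With this, the gradient drift is bounded by $L_F\eta\sqrt{r}$ for both methods. The buffer gap is bounded from B$'$ and Weyl by the same constants, and your argument shows the remaining ingredients of the recursion are normalization-independent. The two recursions then coincide as inequalities, even though the trajectories do not.
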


\textbf{Consequence.}\;
The improvement is \emph{entirely} in $\nu_t$, not approximation quality.
Lemma~\ref{lem:same_tracking} ensures that both algorithms track the same gradient subspace, since the column spaces of $\orth(W)$ and $\ColNorm(W)$ are identical.
At typical ranks, the quantitative impact is substantial. For $r{=}256$, the smoothness cost ratio is $L_r\nu^2/L_r = r = 256\times$, while the residual coupling ratio is $(1{+}\sqrt{r})/(1{+}1) = (1{+}16)/2 = 8.5\times$.
These factors compound, inflating the leading convergence constant by $\sqrt{r}$ through both the smoothness and coupling terms.


\begin{figure*}[t]
  \centering
  \begin{subfigure}[t]{0.48\textwidth}
    \centering
    \includegraphics[width=\linewidth]{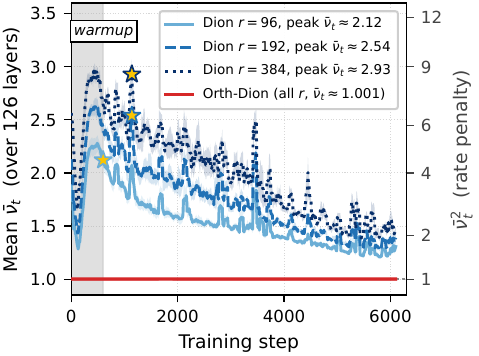}
    \caption{Layer-mean $\bar{\nu}_t$ vs.\ training step.}
    \label{fig:nu_bar}
  \end{subfigure}\hfill
  \begin{subfigure}[t]{0.48\textwidth}
    \centering
    \includegraphics[width=\linewidth]{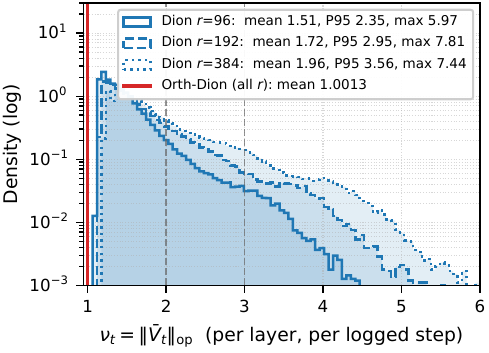}
    \caption{Post-warmup pooled $\nu_t$ distribution.}
    \label{fig:nu_hist}
  \end{subfigure}
  \caption{\footnotesize \textbf{The dual-norm mismatch is real, rank-ordered, and persistent} (Llama~3 320M, $r{\in}\{96,192,384\}$, mean over $3$ seeds; band $\pm$std). \textbf{(a)}~Dion's $\bar{\nu}_t$ separates cleanly by rank and stays well above $1$ throughout training; the right axis shows $\bar{\nu}_t^2$, the factor through which $\nu_t$ enters the smoothness term in \eqref{eq:one_step}. \textbf{(b)}~Per-update $\nu_t$ pooled over layer, step, and seed: the inflation is dispersed, not transient. Orth-Dion collapses to $\nu_t{\approx}1$ in both views (max $\approx 1.003$).}
  \label{fig:sqrt_r_scaling}
\end{figure*}

The bound $\nu_t = \sqrt{r}$ is attained only when all columns of $W_t$ are colinear. This pathological case does not arise in practice. But the inflation it predicts is real, rank-ordered, and persistent throughout training. Figure~\ref{fig:sqrt_r_scaling} measures $\nu_t$ during Llama~3 320M pretraining at $r\in\{96,192,384\}$. Figure~\ref{fig:nu_bar} shows the time-course: Dion's curves separate cleanly by rank (peaks $\approx\!2.1, 2.5, 2.9$, i.e.\ $\bar{\nu}_t^2 \approx 4.5{-}8.6$) and stay strictly above $1$ throughout. Figure~\ref{fig:nu_hist} shows the per-update dispersion: at $r{=}384$, $\sim\!33\%$ of individual updates carry $\nu_t > 2$ and $\sim\!10\%$ carry $\nu_t > 3$. In our runs $\nu_t$ never approaches the worst-case $\sqrt{r}$. It does climb with rank, though, and stays elevated through training. Orth-Dion's $\nu_t$ holds at $1$ everywhere we measured. No hyperparameter setting fixes this; the problem is in the normalization step itself.
Proofs of Lemmas~\ref{lem:partial_isometry}--\ref{lem:same_tracking} are in Appendix~\ref{app:orth_dion_proofs}.

\section{Convergence Theory}
\label{sec:theory}




\subsection{Self-Consistent Drift Elimination}
\label{sec:self_consistent}

The original Dion analysis assumes bounded buffer drift ($\opnorm{M_t{-}M_{t-1}} \leq \Delta_{\mathrm{drift}}$, Assumption~D), but this is circular: $M_t{-}M_{t-1}$ contains the residual increment, which is itself $O(\|G_t\|)$ rather than small. We eliminate this assumption by exploiting an asymmetry: under Assumption A$'$, the \emph{gradient} subspace changes slowly ($\sin\Theta_{\max}(P_G^{t+1}, P_G^t) \leq L_F\,\eta\sqrt{r}/\Delta_{\mathrm{gap}} = O(\eta)$), while the buffer subspace may change abruptly. We track the former and \emph{derive} the latter via a self-consistency loop on the coupled system $(\varepsilon_t, \cR_t)$, where $\varepsilon_t = \sin\Theta_{\max}(P_M^t, P_G^t)$ is the tracking error and $\cR_t = \fnorm{R_t}/\fnorm{G_t}$ the residual ratio. The recursions $\varepsilon_{t+1} \leq \tilde{\gamma}_{\mathrm{eff}}(1{+}\kappa_G)\varepsilon_t + O(\eta)$ and $\cR_{t+1} \leq h(\varepsilon_t,\cR_t)$ form a contractive map $\Phi$ on $[0,1]^2$; for $\eta = O(\Delta_{\mathrm{gap}}/(L_F\sqrt{r}\kappa_G))$, Banach fixed-point yields a unique fixed point with $\cR_\infty,\varepsilon_\infty = O(\eta)$ \emph{provided} the spectral tail is also small (Assumption B$''$); without B$''$ the conclusion is $\cR_\infty=O(\tau+\eta)$. The residual recursion is controlled by the spectral tail $\sigma_{r+1}(M_t)$ plus a tracking-leakage term $\varepsilon_t\sigma_1(M_t)$ (Appendix~\ref{app:self_consistent_proof}).





\subsection{Amortized Contraction}
\label{sec:amortized}

The per-step contraction $\rho_t = \tilde{\gamma}_{t}(1{+}\kappa_{G,t})$ may exceed $1$ during warmup or landscape transitions, so a uniform per-step requirement is too strong. We instead impose \emph{uniform amortized contraction}: there exist $\bar{\rho}\in(0,1)$, $C_\rho\geq 1$ such that
\[
\prod_{i=s}^{t-1} \rho_i \;\le\; C_\rho \,\bar{\rho}^{\,t-s}, \qquad 0 \leq s < t.
\]
Every suffix product decays geometrically up to a constant, allowing transient $\rho_t>1$ while ruling out persistent growth. Under this condition the tracking-error recursion $\varepsilon_t \leq \rho_{t-1}\varepsilon_{t-1} + c_{t-1}$ unrolls to geometric convergence up to a bounded bias term (Theorem~\ref{thm:amortized}, Appendix~\ref{app:amortized_proof})---the natural Lyapunov-style stability condition for subspace tracking, where stability is governed by contraction over windows rather than at every step.

\subsection{Main Theorem}

\begin{theorem}[Convergence of Orth-Dion]
\label{thm:orth_dion}
Under Assumptions A, B', C' with $\beta{=}1$, $\rho'{<}1$, and $\eta = c/\sqrt{T}$:
\begin{equation}
\min_{0 \leq t < T} \kfnorm{G_t}{r} \leq \sqrt{\frac{2(f_0 - f_\infty) L_r}{T}} + O\!\left(\frac{1}{T}\right)
\end{equation}
\vspace{-1em}
\end{theorem}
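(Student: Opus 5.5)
The plan is to telescope the one-step descent inequality \eqref{eq:one_step} with $\nu_t=1$ and show that every error term it contains sums to $O(1)$ over the horizon. Setting $\nu_t=1$ is justified by Lemma~\ref{lem:partial_isometry}, since $U_t$ and $\bar V_t=\orth(W_t)$ both have orthonormal columns. For a single step this gives
\[
\eta\kfnorm{G_t}{r} \le f(X_t)-f(X_{t+1}) + \eta\big(\delta_t + 2\kfnorm{R_t}{r}\big) + \tfrac{L_r}{2}\eta^2 .
\]
Summing over $t<T$ and dividing by $\eta T$ yields
\[
\min_t \kfnorm{G_t}{r} \le \frac{f_0-f_\infty}{\eta T} + \frac{L_r\eta}{2} + \frac1T\sum_{t<T}\big(\delta_t + 2\kfnorm{R_t}{r}\big).
\]
The choice $\eta=\sqrt{2(f_0-f_\infty)/(L_rT)}$, which is of the form $c/\sqrt T$, balances the first two terms and produces exactly the leading constant $\sqrt{2(f_0-f_\infty)L_r/T}$. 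What remains is to show that the averaged error term is $O(1/T)$. Equivalently, it suffices to show $\delta_t+2\kfnorm{R_t}{r}=O(\eta^2)$ after a transient of $O(1)$ steps, whose cumulative contribution is $O(1)$, or more generally that $\sum_t(\delta_t+2\kfnorm{R_t}{r})=O(1)$.

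\textbf{Bounding the residual.} I would use the self-consistent fixed-point argument of Section~\ref{sec:self_consistent}. With $\beta=1$, the map $\Phi$ on $(\varepsilon_t,\cR_t)$ is contractive for $\eta=O(\Delta_{\mathrm{gap}}/(L_F\sqrt r\kappa_G))$, and this holds for $T$ large because $\eta=c/\sqrt T$. The fixed point satisfies $\varepsilon_\infty,\cR_\infty=O(\eta)$. The hypothesis $\rho'<1$ I read as the amortized contraction condition of Section~\ref{sec:amortized}. Under it, Theorem~\ref{thm:amortized} unrolls the tracking recursion to
\[
\varepsilon_t \le C_\rho\bar\rho^{\,t}\varepsilon_0 + O(\eta)/(1-\bar\rho).
\]
Transferring this bound through $h$ gives $\kfnorm{R_t}{r}\le\sqrt r\fnorm{R_t}\le \sqrt r\,G_F\cR_t = O(\bar\rho^{\,t}) + O(\eta)$.

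\textbf{Bounding the defect.} Lemma~\ref{lem:nonneg_defect} gives $\delta_t\ge0$, so only an upper bound is needed. One power step from a warm start aligned up to angle $\varepsilon_t$ with the top-$r$ subspace of $M_t$ captures the top-$r$ singular values up to second order. I would therefore aim for
\[
\delta_t \lesssim \sigma_1(M_t)\,\varepsilon_t^2 + r\,\sigma_{r+1}(M_t),
\]
obtained by comparing $\operatorname{tr}\big((U_t^\top M_tM_t^\top U_t)^{1/2}\big)$, which equals $\ip{M_t}{\hat D_t}$ for the QR right factor, against $\sum_{i\le r}\sigma_i(M_t)$ via a $\sin\Theta$ argument. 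The tail term $\sigma_{r+1}(M_t)$ is controlled by $\sigma_{r+1}(G_t)+\opnorm{R_t}$, which requires invoking B$''$ or absorbing it into the residual bound.

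\textbf{Main obstacle.} The hard step is making these error sums $O(1)$ rather than $O(\eta T)=O(\sqrt T)$. If the residual is only $O(\eta)$ per step, then $\frac1T\sum_t\kfnorm{R_t}{r}=O(\eta)=O(1/\sqrt T)$, which would only perturb the constant in the leading term. I would first try to sharpen the residual bound to $O(\eta^2)$ by exploiting the asymmetry of Section~\ref{sec:self_consistent}. Under $\beta=1$, $R_{t+1}=(I-U_tU_t^\top)M_t$ consists only of the tail $\sigma_{r+1}$ plus leakage $\varepsilon_t\sigma_1$, and the leakage at the fixed point is second order once the gradient subspace drift $O(\eta)$ is compensated by the power step. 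Failing that, I would absorb the $O(\eta)$ residual into the leading term by folding $\kfnorm{R_t}{r}$ into $\kfnorm{G_t}{r}$ through $\kfnorm{M_t}{r}\ge\kfnorm{G_t}{r}-\kfnorm{R_t}{r}$ with a ratio bound $\cR_t\le c\eta$. This turns the error into a multiplicative factor $(1-O(\eta))$ on the left-hand side, whose effect on the rate is $O(1/T)$ relative to $\sqrt{L_r/T}$ after expansion. The transient terms $C_\rho\bar\rho^{\,t}$ sum to $C_\rho/(1-\bar\rho)$ and contribute $O(1/T)$ directly.
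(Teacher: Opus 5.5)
Your telescoping and step-size choice match the paper's Steps 1, 3 and 4, but the defect step fails as written. The identity $\ip{M_t}{\hat D_t}=\operatorname{tr}\big((U_t^\top M_tM_t^\top U_t)^{1/2}\big)=\nucnorm{W_t}$ holds when $\bar V_t$ is the polar factor $W_t(W_t^\top W_t)^{-1/2}$. It does not hold when $\bar V_t$ is the Q-factor of a QR decomposition, as in Algorithm~\ref{alg:orth_dion}. Write $W_t=\bar V_tS_t$ with $S_t$ upper triangular. Then $\ip{M_t}{\hat D_t}=\operatorname{tr}(\bar V_t^\top W_t)=\operatorname{tr}(S_t)\le\nucnorm{S_t}=\nucnorm{W_t}$, with equality only if $S_t$ is diagonal. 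That happens only if the columns of $W_t$ are already orthogonal, which is exactly the case in which ColNorm also gives $\nu_t=1$. So $\delta_t$ depends on how the individual columns of $U_t$ sit inside the tracked subspace, not only on the subspace angle $\varepsilon_t$.

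For example, take $M_t=U_r\diag(2,1)V_r^\top$ and $U_t=U_rQ$ with $Q$ a $45^\circ$ rotation. Then $\varepsilon_t=0$, but $\operatorname{tr}(S_t)=\sqrt{5/2}+2/\sqrt{5/2}\approx 2.85<3=\kfnorm{M_t}{2}$, an $O(1)$ defect. So no bound of the form $\delta_t\lesssim\sigma_1(M_t)\varepsilon_t^2+r\sigma_{r+1}(M_t)$ can hold for QR. The first-order bound $\delta_t\le C'_\delta G_F\varepsilon_t$ that the paper's proof asserts without derivation fails on the same example. To close this you need one of two things. Either show that the warm-started QR iteration aligns $U_t$ column by column with the singular vectors of $M_t$, which needs gaps among all of the top $r$ singular values, not just at index $r$. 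Or replace QR by the polar factor of $W_t$, at the same $O(nr^2)$ cost. For the polar variant your identity is correct, and in fact $\delta_t\le r\,\sigma_1(M_t)\,\varepsilon_t^2$ with no tail term.

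Your identification of the main obstacle is correct and sharper than the paper's proof. The paper bounds the error sum by $O(T\eta^2)$ and records it as $O(1/T)$ after dividing by $\eta T$, but that division leaves $C\eta=\Theta(1/\sqrt T)$, which changes the leading constant. Your first repair, sharpening the residual to $O(\eta^2)$, does not work in general. Since $U_tU_t^\top M_t$ has rank at most $r$, $\opnorm{R_{t+1}}\ge\sigma_{r+1}(M_t)$ even under perfect tracking, and B$''$ only makes this floor $O(\eta)$. The leakage $\varepsilon_t\sigma_1(M_t)$ is also first order, because the fixed-point tracking error $\varepsilon'_\infty=\tilde\gamma_{\mathrm{eff}}D_{\mathrm{eff}}/(1-\rho')$ is linear in the $O(\eta)$ drift.

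Your second repair is the right idea: move a relative bound $\kfnorm{R_t}{r}\le c\eta\kfnorm{G_t}{r}$, plus a summable transient, to the left-hand side as a factor $1-O(\eta)$. However, the recursion you transfer through $h$ does not supply that input. At $\beta=1$, combining $\opnorm{R_{t+1}}\le\sigma_{r+1}(M_t)+\varepsilon_t\sigma_1(M_t)$ with the Weyl bound $\sigma_{r+1}(M_t)\le\tau+\opnorm{R_t}$ gives $\opnorm{R_{t+1}}\le\opnorm{R_t}+\tau+\varepsilon_t\sigma_1(M_t)$. This has coefficient $1$ on $\opnorm{R_t}$, so it permits the residual to accumulate rather than settle at $O(\eta)$. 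A uniform $O(\eta)$ residual bound therefore remains to be proved. It would need either a genuinely contracting mechanism (with $\beta<1$ the coefficient becomes $\beta$) or an extra assumption that rules out coherent accumulation of the out-of-subspace component.
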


The leading constant matches exact-SVD Muon, where $\nu{=}1$, and improves on Stripped Dion's $\sqrt{2(f_0{-}f_\infty)L_r r/T}$, where $\nu{=}\sqrt{r}$, by a factor of $\sqrt{r}$.
The full per-step cost is $O(mnr) + O(nr^2)$, matching Stripped Dion.
The improvement is geometric, since Orth-Dion achieves $\nu{=}1$ by construction and eliminates the rank-dependent inflation in the smoothness term.
The proof appears in Appendix~\ref{app:main_proof}.

\section{\texorpdfstring{Error Feedback and the $\beta<1$ Regimes}{Error Feedback and the beta < 1 Regimes}}
\label{sec:beta_regimes}
The one-step descent bound \eqref{eq:one_step} contains two rank-dependent terms: the smoothness penalty $L_r\nu_t^2\eta^2/2$, addressed in Sections~\ref{sec:sqrt_r}--\ref{sec:orth_dion} via $\nu_t{=}1$, and the residual coupling $(1{+}\nu_t)\kfnorm{R_t}{r}$, whose dynamics depend on $\beta$. With fixed $\nu_t$, the residual recursion $R_{t+1} = \beta\,(I - U_t U_t^\top)(G_t{+}R_t)$ is governed by the cross-step correlation of the out-of-subspace gradient. Characterizing this correlation yields a $0.315$-nat improvement on GPT-2~45M at zero additional compute (Table~\ref{tab:beta_sweep}) and shows that EF is mechanistically distinct from heavy-ball momentum. 
Define the \emph{gradient subspace persistence}
$\phi_t := \cos\angle\big((I{-}P_t)G_t,\; (I{-}P_{t-1})G_{t-1}\big)$ and the \emph{R-norm ratio} $\cR_t := \fnorm{R_t}/\fnorm{G_t}$. Details in Appendix~\ref{app:beta_details}.

\textbf{Regime~1 (Coherent, $\phi_t {>} 0$):}\;
Out-of-subspace components persist across steps, so $R_T \approx \sum_t (I{-}P_t)G_t$ accumulates coherently.
The subspace eventually rotates to capture this direction; $\beta{=}1$ is optimal.

\textbf{Regime~2 (Stochastic, $\phi_t {\approx} 0$):}\;
Typical of LLM training with stochastic mini-batches.
The residual performs a random walk, acting as \emph{implicit heavy-ball momentum} with effective coefficient $\hat{\varepsilon}^2$, where $\hat{\varepsilon}_t = \|(I{-}P_t)M_t\|_F/\fnorm{M_t}$ is the tracking error.
Reducing $\beta$ below 1 boosts this effect at zero extra compute. The optimal $\beta$ satisfies $\beta^* \approx 1{-}2\hat{\varepsilon}$; for GPT-2 with baseline tracking error $\hat{\varepsilon}{\approx}0.35$ measured at $\beta{=}1$, this predicts $\beta^*\!\ll\!1$. Empirically, val loss decreases monotonically from $\beta{=}1$ down to $\beta{\approx}0.1$ (Table~\ref{tab:beta_sweep}), and $\cR_t^*$ grows from 0.3 to 3.2 over the same range, directly tracking the implicit-momentum accumulation the theory predicts. Explicit Polyak momentum is not a substitute: PolyakDion ($\mu{=}0.95$, $R{\equiv}0$) reaches val.\ loss $3.162$ vs.\ $3.056$ for Dion at $\beta{=}0.3$, since EF reshapes the buffer \emph{before} compression while explicit momentum acts in the full parameter space.

\textbf{Regime~3 (Anti-correlated, $\phi_t {<} 0$):}\;
Residual components cancel across steps; the buffer diverges and EF is harmful.
$\beta{\to}0$ is necessary.

This framework extends EF21~\citep{richtarik2021ef21}, which assumes independent lost signal (Regime~2 only) and misses the coherent and anti-correlated cases.
Full derivations are in Appendix~\ref{app:beta_details}.

\section{Ada-Orth-Dion: Adaptive Rank as Intrinsic Dimensionality}
\label{sec:ada_dion}
Although Orth-Dion achieves convergence improvement with a minimal fix, the extra QR orthogonalization adds compute overhead per step.
To mitigate this and match Dion's wall-clock speed, we couple Orth-Dion with a per-layer adaptive rank mechanism to reduce the rank and the overhead, yielding \emph{Ada-Orth-Dion}. 

\textbf{From contraction to intrinsic dimensionality.}\;
The contraction condition $\tilde{\gamma}_{i,r}(1{+}\kappa_{G,i,r}) < 1$ governing Theorem~\ref{thm:orth_dion} decomposes into two competing terms per layer~$i$: $\tilde{\gamma}_{i,r}=\sigma_{r{+}1}(M_{i,t})/\sigma_r(M_{i,t})$ \emph{decreases} with $r$ (more signal captured), while $\kappa_{G,i,r}=\sigma_1(G_{i,t})/\sigma_r(G_{i,t})$ \emph{increases} with $r$ (the tail is harder to track). The critical rank $r_i^* = \max\{r: \tilde{\gamma}_{i,r}(1{+}\kappa_{G,i,r}) < 1\}$ marks the signal-to-noise transition and is the layer's \emph{intrinsic dimensionality}: above it, power iteration diverges; below it, the spectral update is provably contracting. 

\textbf{Lightweight estimator.}\;
We approximate $r_i^*$ from Dion's existing factors with no extra SVD: $\widehat{\erank}(M_t) = \exp(-\sum_i \hat{p}_i\log\hat{p}_i)$ where $\hat{p}_i = \hat{\sigma}_i/\sum_j\hat{\sigma}_j$ and $\hat{\sigma}_i = \|[W_t]_i\|_2$. The rank is updated via EMA smoothing ($\alpha$), a buffer multiplier ($\gamma{=}1.1$), clipping, and rounding to multiples of $8$ for GPU efficiency (full policy in Algorithm~\ref{alg:rank_policy}, Appendix~\ref{app:ada_dion_algo}).



\textbf{QR remains essential at adaptive rank.}\;
Reducing $r$ \emph{does not} substitute for fixing $\nu_t$.
Proposition~\ref{prop:colnorm_sqrt_r} bounds $\nu_t$ by the working rank, so a ColNorm-based adaptive scheme inherits the $\sqrt{r}$ penalty at whatever rank it selects.
Ada-Orth-Dion combines two complementary improvements, with QR removing the rank-dependent geometric inflation, as shown in Lemma~\ref{lem:partial_isometry}, and adaptive rank trimming the working rank to its intrinsic dimensionality.
Empirically (Table~\ref{tab:llm_main}), Ada-Orth-Dion settles at avg.\ rank $\sim\!357$ from a $r{=}384$ start, achieves the lowest validation loss of any method we evaluate that communicates strictly less than the parameter matrix ($3.282$), and reaches Dion's best loss $18.3\%$ faster than Dion at matched starting rank.

\section{Experiments}
\label{sec:experiments}

Our experiments answer: (1) Does replacing ColNorm with QR improve convergence at matched rank, as predicted by Lemma 4.1? (2) Does Ada-Orth-Dion recover Dion's per-step wall-clock on a larger model? (3) How does the error-feedback coefficient $\beta$ interact with the corrected geometry?

\subsection{LLM Pre-training: Llama~3 320M}
\label{sec:exp_llm}

\textbf{Setup.}\;
320M-param Llama~3~\citep{grattafiori2024llama} on C4~\citep{raffel2020t5}, 3.2B tokens (Chinchilla-optimal~\citep{hoffmann2022chinchilla}), seq len 2048, 8$\times$GH200 with FSDP.
All methods use cosine LR schedule with warmup, $\beta{=}1.0$ (unless stated), and identical hyperparameters except the normalization.
Baselines: AdamW ($\beta_1{=}0.95$, $\beta_2{=}0.95$), Muon (full-rank with Newton--Schulz), Dion ($r{\in}\{96,192,384\}$ with ColNorm), Orth-Dion (same ranks with QR), and Ada-Orth-Dion (adaptive rank, maximum rank $r_{max}$ at 384, rank initialized to $r_{max}$).

\begin{table}[t]
  \centering
  \caption{Llama~3 320M pre-training on C4 (3.2B tokens, 8$\times$GH200). Mean $\pm$ 2 std over 3 seeds. The steps reduction represents the percent reduction in steps to reach the Dion's minimum validation loss at equivalent rank.}
  \label{tab:llm_main}
  \resizebox{\linewidth}{!}{%
  \begin{tabular}{lccccc}
    \toprule
    \textbf{Method} & \textbf{Val.\ Ppl} & \textbf{Val.\ Loss} & $\bar{\nu}_t$ & \textbf{Avg.\ Rank} & \textbf{Steps Reduction} \\
    \midrule
    AdamW & 28.46{$\,\pm\,$1.59} & 3.348{$\,\pm\,$0.055} & --- & --- & --- \\
    Muon (Full, $r{=}768$) & 25.92{$\,\pm\,$0.25} & 3.255{$\,\pm\,$0.010} & 1.00 & full & --- \\
    \midrule
    Dion ($r{=}96$) & 29.58{$\,\pm\,$0.25} & 3.387{$\,\pm\,$0.009} & 1.20{$\,\pm\,$0.01} & 96 & --- \\
    Dion ($r{=}192$) & 28.37{$\,\pm\,$0.06} & 3.345{$\,\pm\,$0.002} & 1.28{$\,\pm\,$0.10} & 192 & --- \\
    Dion ($r{=}384$) & 27.39{$\,\pm\,$0.10} & 3.310{$\,\pm\,$0.004} & 1.34{$\,\pm\,$0.01} & 384 & --- \\
    \midrule
    Orth-Dion ($r{=}96$) & 29.37{$\,\pm\,$0.09} & 3.380{$\,\pm\,$0.003} & 1.00 & 96 & 13.1\%{$\,\pm\,$1.6\%} \\
    Orth-Dion ($r{=}192$) & 28.16{$\,\pm\,$0.13} & 3.338{$\,\pm\,$0.005} & 1.00 & 192 & 12.6\%{$\,\pm\,$1.9\%} \\
    Orth-Dion ($r{=}384$) & 27.20{$\,\pm\,$0.20} & 3.303{$\,\pm\,$0.007} & 1.00 & 384 & 12.3\%{$\,\pm\,$4.3\%} \\
    \midrule
    Ada-Orth-Dion ($r_{max}=384$) & 26.62{$\,\pm\,$0.10} & 3.282{$\,\pm\,$0.004} & 1.00 & 356.9{$\,\pm\,$12.2} & 18.3\%{$\,\pm\,$1.0\%} \\
    \bottomrule
  \end{tabular}%
  }
\end{table}

\textbf{Findings} (Table~\ref{tab:llm_main}).\;
\emph{(a) Matched-rank improvement.} Orth-Dion lowers val.\ loss over Dion at every rank and reaches Dion's best loss in $12$--$13\%$ fewer steps, which equates to $\sim10\%$ reduction in wall-clock time (Figure~\ref{fig:orth_speedup}). This confirms that replacing ColNorm with QR improves convergence.
\emph{(b) Direct theory validation.} Dion's measured $\bar{\nu}_t$ inflates monotonically with $r$ ($1.20{\to}1.28{\to}1.34$) while Orth-Dion has $\bar{\nu}_t = 1.00$ across all ranks, confirming Proposition~\ref{prop:colnorm_sqrt_r} and Lemma~\ref{lem:partial_isometry}.
\emph{(c) Adaptive rank.} Ada-Orth-Dion beats all low-rank baselines at an average rank of $356.9$ (93\% of Dion's $r=384$). The rank reduction can lead to wall-clock time improvement in large scale models, as discussed in Section~\ref{sec:exp_wallclock_17b}.

\subsection{\texorpdfstring{Measuring $\nu_t$ at LLM Scale}{Measuring nu sub t at LLM Scale}}

\label{sec:exp_sqrt_r}

The central theoretical prediction is that Dion's convergence degrades as $\sqrt{r}$ while Orth-Dion is rank-independent.
We validate this with two experiments.

\textbf{Mechanism check.}\;
Figure~\ref{fig:sqrt_r_scaling} (§\ref{sec:sqrt_r}) measures $\nu_t = \opnorm{\bar{V}_t}$ for every matrix layer and logged step during Llama~3 320M pretraining at $r\in\{96,192,384\}$. Both the time-course (Figure~\ref{fig:nu_bar}) and per-update distribution (Figure~\ref{fig:nu_hist}) confirm Proposition~\ref{prop:colnorm_sqrt_r}: Dion's $\nu_t$ inflates monotonically with $r$ throughout training, while Orth-Dion saturates the partial-isometry bound $\nu_t = 1$ across every (rank, layer, step) sample (Lemma~\ref{lem:partial_isometry}).

\textbf{Matched-rank, hyperparameter-controlled comparison.}\;
The one-step descent bound \eqref{eq:one_step} couples three rank-dependent terms: the captured-gradient mass $\kfnorm{G_t}{r}$ and the residual $\kfnorm{R_t}{r}$ both improve with $r$, while the smoothness penalty $L_r\nu_t^2\eta^2/2$ inflates with $\nu_t^2$. Their relative weight is regime-dependent, and the matched-$r$ Orth-Dion vs.\ Dion comparison, where the only difference is ColNorm vs.\ QR on Line~4 of Algorithm~\ref{alg:orth_dion}, is the clean isolation of the geometric correction.


On Llama 3 320M, the residual and descent gains from a larger $r$ outweigh the smoothness penalty, so Dion's val.\ loss falls as $r$ grows (Figure~\ref{fig:val_loss}). Dion's $\bar{\nu}_t$ does rise with rank, from $1.20$ at $r{=}96$ to $1.34$ at $r{=}384$, but the residual benefit at higher $r$ wins. With $r$ fixed, the only difference between Dion and Orth-Dion is ColNorm vs.\ QR, which translates directly to the dual-norm inflation. Orth-Dion beats Dion at every rank we tested (Table~\ref{tab:llm_main}). Learning rates, momentum, and schedules are identical across runs.


\begin{figure*}[t]
  \centering
  \centerline{\includegraphics[width=0.9\textwidth]{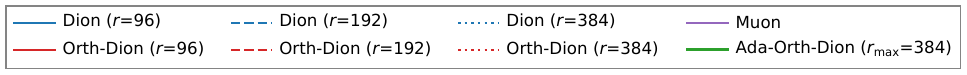}}
  \begin{subfigure}[t]{0.32\textwidth}
    \centering
    \includegraphics[width=\linewidth]{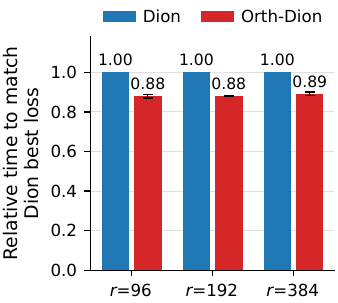}
    \caption{Wall-clock time to match Dion, normalized to mean Dion loss${=}1$. Orth-Dion: mean$\pm$2 std.}
    \label{fig:orth_speedup}
  \end{subfigure}\hfill
  \begin{subfigure}[t]{0.32\textwidth}
    \centering
    \includegraphics[width=\linewidth]{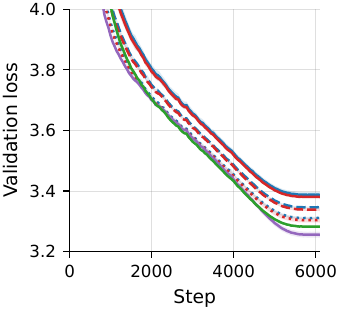}
    \caption{Validation loss on c4\_validation, logged every 50 steps.}
    \label{fig:val_loss}
  \end{subfigure}\hfill
  \begin{subfigure}[t]{0.32\textwidth}
    \centering
    \includegraphics[width=\linewidth]{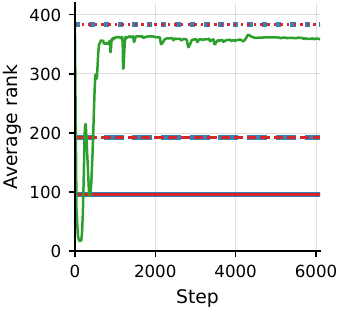}
    \caption{Mean per-block effective rank during training.}
    \label{fig:avg_rank}
  \end{subfigure}
    
    \caption{\footnotesize \textbf{Training dynamics for Llama 3 320M (6{,}100 steps, 8$\times$GH200 FSDP).} \textbf{(a)}~Orth-Dion reaches Dion's best loss at matched rank in $\sim0.89\times$ wall-clock time across $r\in\{96,192,384\}$. \textbf{(b)}~At every rank, Orth-Dion lands at lower C4 validation loss than Dion, consistent with Lemma~\ref{lem:partial_isometry}. \textbf{(c)}~Ada-Orth-Dion (init $r{=}384$) has $r\approx357$ on average; the fixed-rank curves are flat by construction.}
  \label{fig:main_4shards}
  \vspace{-2mm}
\end{figure*}

\subsection{Wall-Clock on Llama 3 17.1B}
\label{sec:exp_wallclock_17b}

Figure~\ref{fig:fig1_wallclock} measures per-step time on Llama 3 with 17.1B parameters on $8{\times}$A40 with FSDP full-shard. 
The largest per-block matrix is 16384$\times$43776, roughly 450$\times$ the size of the largest 320M matrix (768$\times$2048).
Combined with A40s' lower per-link bandwidth than NVLink-class accelerators, this pushes NCCL collectives into the bandwidth-bound regime, where communication volume dominates step time and the gap between low-rank and full-rank updates is most visible. 
Two findings: \emph{(i)} fixed-rank Orth-Dion is slightly slower than Dion at matched $r$ due to the QR step, and \emph{(ii)} Ada-Orth-Dion closes that gap and matches Dion's per-step time. Combined with the convergence advantage of Section~\ref{sec:exp_llm}, Ada-Orth-Dion gives the same per-step time as Dion at lower validation loss. Full architecture, hardware, and rank-policy details are in Appendix~\ref{app:llm_details:wide}.

\subsection{\texorpdfstring{Error Feedback: $\beta$-Sweep}{Error Feedback: beta-Sweep}}
\label{sec:exp_beta}

\begin{wraptable}{r}{0.46\textwidth}
\vspace{-\baselineskip}
\centering
\small
\caption{$\beta$-sweep on GPT-2 45M/WikiText-103. Smaller $\beta$ improves loss until $\beta{\approx}0.05$--$0.1$ while $\cR_t^*$ grows; disabling EF is worse. }
\label{tab:beta_sweep}
\begin{tabular}{lccc}
\toprule
\textbf{$\beta$} & \textbf{Val loss} & $\Delta$ \textbf{vs.}\ $\beta{=}1$ & $\cR_t^*$ \\
\midrule
0.05 & 2.982 & $-0.311$ & 4.6 \\
0.1  & \textbf{2.978} & $-0.315$ & 3.2 \\
0.3  & 3.056 & $-0.237$ & 1.7 \\
0.5  & 3.111 & $-0.183$ & 1.2 \\
1.0 (baseline) & 3.293 & --- & 0.3 \\
0 (no EF) & 3.554 & $+0.261$ & --- \\
\bottomrule
\end{tabular}
\vspace{-1em}
\end{wraptable}


Table~\ref{tab:beta_sweep} validates the $\beta{<}1$ regime on GPT-2 45M (wikitext-103, $r{=}64$, 30k steps), where $\phi_t \approx 0$ (Regime~2). Full $9$-point sweep in App.~\ref{app:beta_details}. Val loss decreases monotonically as $\beta$ is reduced from $1.0$ down to $\beta{\approx}0.1$, with the empirical optimum at $\beta{\approx}0.1$ (val.\ loss $2.978$ vs.\ $3.293$ for $\beta{=}1$), a $0.315$-nat improvement obtained at zero additional compute, in line with the theoretical prediction $\beta^*\!\ll\!1$. The R-norm ratio $\cR_t^*$ grows from $0.3$ at $\beta{=}1$ to $3.2$ at $\beta{=}0.1$, directly tracking the predicted implicit-momentum accumulation.
Disabling EF entirely ($\beta{=}0$) hurts by $0.261$ nats relative to $\beta{=}1$, confirming that EF's contribution is essential even when heavily damped.

\section{Analysis and Discussion}
\label{sec:analysis}

The $\sqrt{r}$ factor is rank-dependent, and rank is the parameter practitioners scale up. The realized $\nu_t$ in our Llama~3 runs sits well below the worst-case $\sqrt{r}$ envelope (Fig.~\ref{fig:sqrt_r_scaling}), so the absolute gap at any fixed rank is gentler than the worst case; nonetheless the inflation is monotone in $r$ and persists across layers and training steps. This gap is geometric rather than statistical. Lemma~\ref{lem:same_tracking} shows both algorithms track the same gradient subspace with identical error, so ColNorm does not lose information. It produces a direction that overshoots the dual-norm unit ball. The QR fix removes the inflaall inflation from the norm geometry of the update, independent of how tight the worst-case bound is.

\section{Related Work}
\label{sec:related}

\textbf{Spectral-norm optimizers.}\;
Shampoo~\citep{gupta2018shampoo} and SOAP~\citep{vyas2024soap} precondition with Kronecker factors
of second-moment statistics. 
Muon~\citep{jordan2024muon} instead takes the orthogonal polar factor of the gradient. That is the closed-form steepest-descent direction under the spectral norm in the modular-norm framework of~\citet{bernstein2024modular} computed via Newton–Schulz iteration.
All three require $O(mn)$ communication per matrix.


\textbf{Error feedback for biased compressors.}\;
Error feedback originates with 1-bit SGD~\citep{seide2014onebit} and was developed through
QSGD~\citep{alistarh2017qsgd}, sparsified SGD~\citep{stich2018sparsified}, and the EF21 framework~\citep{richtarik2021ef21}.
\citet{karimireddy2019error} showed EF restores convergence for sign-based methods.
All assume Euclidean smoothness, under which our setting's bounds are loose by $\min(m,n)/r$. 
Our $\phi_t$-framework adapts the EF analysis to operator-norm geometry and resolves the cross-step correlation between residual and signal that standard analyses implicitly absorb into a worst-case constant (Regimes 1 and 3).

\textbf{Low-rank and adaptive distributed training.}\;
PowerSGD~\citep{vogels2019powersgd} performs power iteration under Frobenius geometry, where column normalization is harmless.
The Ky Fan mismatch is specific to the operator-norm setting. 
Dion~\citep{ahn2025dion} inherits ColNorm from this lineage and (as we show) pays a $\sqrt{r}$ penalty for it.
Dion2~\citep{ahn2025dion2} selects row/column subsets without touching the normalization step. 
GaLore~\citep{zhao2024galore} and ReLoRA~\citep{lialin2023relora} operate in Frobenius geometry and so neither encounter nor benefit from the QR fix.
MuonBP~\citep{khaled2025muonbp} keeps full-rank updates but synchronizes only periodically, addressing communication along an orthogonal axis. 
For adaptive rank, AdaLoRA~\citep{zhang2023adalora} prunes singular values by importance during fine-tuning, and adaptive GaLore variants set rank from gradient-projection statistics.
Our contraction condition instead ties per-layer rank to the gradient spectrum's signal-to-noise transition in the from-scratch optimization regime, where the relevant geometry is operator-norm rather than Frobenius.

\section{Conclusion}
\label{sec:conclusion}

We identified a $\sqrt{r}$ geometric inefficiency in low-rank spectral optimization. Column normalization inflates the dual norm $\nu_t$ by up to $\sqrt{r}$, producing a convergence penalty that grows with rank. Replacing ColNorm with QR orthogonalization forces $\nu_t{=}1$ and recovers the exact polar rate targeted by Muon. Direct $\nu_t$ measurement on Llama~3 320M confirms the predicted rank-dependent inflation in Dion and that QR removes it identically.

Ada-Orth-Dion couples the geometric fix with per-layer adaptive rank that targets each layer's intrinsic dimensionality. 
The two corrections are complementary: QR removes the rank-dependent inflation, and adaptive rank reduces the working rank to offset QR's per-step compute overhead. 
It achieves the lowest validation loss ($3.282$) of any low-rank method evaluated, reaches within $0.027$ nats of full-rank Muon ($3.255$) at substantially lower per-step communication, and attains Dion's best loss in 18.3\% fewer steps. 
On the 17.1B model (Figure~\ref{fig:fig1_wallclock}, Appendix~\ref{app:llm_details:wide}), this convergence advantage holds at Dion-comparable per-step time.
Together, the fixed-rank Orth-Dion result establishes the geometric correction, and Ada-Orth-Dion turns it into a practical low-rank spectral optimizer with a strictly better convergence--communication trade-off than Dion.

\textbf{Limitations.}\;
Our analysis assumes deterministic gradients; extending to the stochastic setting requires bounding the interaction between mini-batch noise and subspace tracking.
The adaptive rank policy uses a heuristic estimator; tighter connections to the contraction condition would enable provable rank guarantees.


\section*{Acknowledgements}

This research was conducted using the Supermicro ARS-111GL-DNHR-LCC and FUJITSU Server PRIMERGY CX2550 M7 (Miyabi) at Joint Center for Advanced High Performance Computing (JCAHPC).
The authors also acknowledge the support of Compute Canada and Mila computing clusters for experimental resources.
We are grateful to the Masason Foundation for supporting this work through computational resources and for fostering a collaborative research environment.

\bibliographystyle{plainnat}
\bibliography{references.bib}

\newpage
\appendix
\section*{Appendix}
\section{Assumptions (Full Statements)}
\label{app:assumptions}

\begin{assumption}[Non-Euclidean smoothness]
\label{app:ass_smoothness}
There exists $L_r > 0$ such that for all $X, \Delta$ with $\mathrm{rank}(\Delta) \leq r$:
\[
f(X + \Delta) \leq f(X) + \ip{\nabla f(X)}{\Delta} + \frac{L_r}{2}\dualnorm{\Delta}^2.
\]
This uses the Ky Fan dual norm $\|\cdot\|_{(r)}$ rather than the Frobenius norm. For rank-$r$ updates $\Delta = U\bar{V}^\top$ with orthonormal $U$, $\bar{V}$: $\dualnorm{\Delta} = 1$, so the smoothness cost reduces to $L_r/2$ independent of $r$.

Assumption A is a one-sided function-value inequality. In Frobenius/Euclidean geometry, a matching descent lemma and a gradient-Lipschitz bound are equivalent; this equivalence \emph{fails} for the KF dual norm $\|\cdot\|_{(r)}$. Assumption A therefore does not by itself imply any bound of the form $\|\nabla f(Y)-\nabla f(X)\|_\bullet \le L_r\,\|Y-X\|_\bullet$. Steps that need such a bound (the gradient-subspace drift in Appendix~\ref{app:self_consistent_proof} and the resulting fixed-point argument) require Assumption A$'$ below.
\end{assumption}

\begin{assumption}[Gradient Lipschitz in Frobenius -- A$'$]
\label{app:ass_smoothness_F}
There exists $L_F > 0$ such that for all $X,Y \in \mathbb{R}^{m\times n}$:
\[
\fnorm{\nabla f(Y) - \nabla f(X)} \leq L_F \fnorm{Y-X}.
\]
This is the standard Euclidean gradient-Lipschitz condition. It is independent of Assumption A in general, and is the constant that drives the Wedin-type gradient-subspace drift bound used in Appendix~\ref{app:self_consistent_proof}. (Where the framework manuscript refers to a drift constant of $L_r\sqrt{r}$, the rigorous constant is $L_F\sqrt{r}$.)
\end{assumption}

\begin{assumption}[Gradient spectral gap -- B$'$]
\label{app:ass_gap}
There exist $\Delta_{\mathrm{gap}} > 0$ and $\sigma_{\min} > 0$ such that for all $t \geq 0$:
$\sigma_r(G_t) - \sigma_{r+1}(G_t) \geq \Delta_{\mathrm{gap}}$ and $\sigma_r(G_t) \geq \sigma_{\min}$.
This is a property of the \emph{gradient} (the optimization landscape), not the algorithm's internal state. It replaces the original Dion framework's Assumption B, which imposed a gap on the \emph{buffer} $M_t$---a quantity that depends on the algorithm's own error feedback and is therefore circular.

\emph{Local validity.} Assumption B$'$ is intended to hold over the regime in which the gradient still has its meaningful rank-$r$ structure (early/mid training). Near convergence, where $\kfnorm{G_t}{r}\to 0$, the lower bound $\sigma_r(G_t)\geq\sigma_{\min}$ may fail; the convergence results should be read as applying over any horizon on which B$'$ is in force.
\end{assumption}

\begin{assumption}[Small spectral tail -- B$''$]
\label{app:ass_tail}
There exists $\tau \geq 0$ such that for all $t\geq 0$: $\sigma_{r+1}(G_t) \leq \tau$, with $\tau = O(\eta)$ in the regime where the convergence theorem is asserted (cf.~Appendix~\ref{app:self_consistent_proof}).

\emph{Why this is needed.} The self-consistent residual recursion of Appendix~\ref{app:self_consistent_proof} yields the operator-norm bound $\opnorm{R_{t+1}} \leq \sigma_{r+1}(M_t) + \varepsilon_t \sigma_1(M_t)$. Assumption B$'$ controls the \emph{gap} but not the \emph{tail}; without B$''$, the floor $\sigma_{r+1}(M_t)$ is not $O(\eta)$, and the advertised $R_\infty = O(\eta)$ conclusion does not follow. With B$''$, the boxed rate of Theorem~\ref{thm:orth_dion} acquires an additive $O(\tau)$ term, which is absorbed when $\tau = O(\eta)$. When B$''$ fails (large tail), the rate becomes $\sqrt{2(f_0-f_\infty)L_r/T} + O(\tau) + O(1/T)$, i.e.\ a floor of order $\tau$.
\end{assumption}

\begin{assumption}[Gradient bounds -- C$'$]
\label{app:ass_bounds}
There exist $G_F, G_K, \kappa_G > 0$ such that for all $t \geq 0$:
$\fnorm{G_t} \leq G_F$, $\kfnorm{G_t}{r} \leq G_K$, $\kappa_r(G_t) \leq \kappa_G$.
\end{assumption}

\paragraph{Notational map for stability constants.}
Three closely related contraction symbols appear across the paper. To keep them straight:
\begin{itemize}
\item $\rho_t := \tilde{\gamma}_t(1+\kappa_{G,t})$ is the \emph{instantaneous} per-step contraction.
\item $\rho' < 1$ denotes a \emph{uniform per-step} bound $\rho_t \leq \rho'$ for all $t$ (used in Theorem~\ref{thm:orth_dion} for the cleanest statement).
\item $\bar{\rho} \in (0,1)$ together with $C_\rho \geq 1$ are the \emph{amortized / suffix-product} constants used when $\rho_t > 1$ is allowed transiently: $\prod_{i=s}^{t-1}\rho_i \le C_\rho\,\bar{\rho}^{t-s}$ for all $s<t$ (Section~\ref{sec:amortized}).
\end{itemize}
The uniform condition is the special case $\bar{\rho}=\rho'$, $C_\rho=1$. Theorem statements that say ``$\rho'<1$'' are tight conditions; the amortized version (Theorem~\ref{thm:amortized}) yields the same conclusion under the suffix-product condition above.

\section{Algorithm Details}
\label{app:algorithms}

\subsection{Full Algorithms}

\begin{algorithm}[H]
   \caption{Stripped Dion (Baseline)}
   \label{app:alg_stripped_dion}
\begin{algorithmic}[1]
   \STATE {\bfseries Input:} learning rate $\eta$, error-feedback coefficient $\beta$, rank $r$
   \FOR{$t=0,1,\dots,T-1$}
   \STATE $M_t \leftarrow G_t + R_t$ \hfill\textit{(buffer = gradient + residual)}
   \STATE $U_t \leftarrow \orth(M_t V_{t-1})$ \hfill\textit{(one step of subspace iteration)}
   \STATE $W_t \leftarrow M_t^\top U_t$, \quad $\bar{V}_t \leftarrow \ColNorm(W_t)$ \hfill\textit{(\textbf{column normalization})}
   \STATE $\hat{D}_t \leftarrow U_t \bar{V}_t^\top$ \hfill\textit{(rank-$r$ update direction)}
   \STATE $X_{t+1} \leftarrow X_t - \eta_t \hat{D}_t$
   \STATE $B_t \leftarrow U_t U_t^\top M_t$, \quad $R_{t+1} \leftarrow \beta\,(M_t - B_t)$ \hfill\textit{(error feedback)}
   \ENDFOR
\end{algorithmic}
\end{algorithm}

\begin{algorithm}[H]
   \caption{Orth-Dion (Proposed)}
   \label{app:alg_orth_dion}
\begin{algorithmic}[1]
   \STATE {\bfseries Input:} learning rate $\eta$, error-feedback coefficient $\beta$, rank $r$
   \FOR{$t=0,1,\dots,T-1$}
   \STATE $M_t \leftarrow G_t + R_t$ \hfill\textit{(buffer = gradient + residual)}
   \STATE $U_t \leftarrow \orth(M_t V_{t-1})$ \hfill\textit{(one step of subspace iteration)}
   \STATE $W_t \leftarrow M_t^\top U_t$, \quad $\bar{V}_t \leftarrow \orth(W_t)$ \hfill\textit{(\textbf{QR orthogonalization})}
   \STATE $\hat{D}_t \leftarrow U_t \bar{V}_t^\top$ \hfill\textit{(rank-$r$ update direction)}
   \STATE $X_{t+1} \leftarrow X_t - \eta_t \hat{D}_t$
   \STATE $B_t \leftarrow U_t U_t^\top M_t$, \quad $R_{t+1} \leftarrow \beta\,(M_t - B_t)$ \hfill\textit{(error feedback)}
   \ENDFOR
\end{algorithmic}
\end{algorithm}

\subsection{Ada-Orth-Dion Rank Selection}
\label{app:ada_dion_algo}

\begin{definition}[Estimated effective rank]
Given power-iteration factors $U_t \in \R^{m \times r}$ and $W_t = M_t^\top U_t \in \R^{n \times r}$, define $\hat{\sigma}_i = \|[W_t]_i\|_2$ and:
\[
\widehat{\erank}(M_t) = \exp\!\Big(-\sum_{i=1}^r \hat{p}_i \log \hat{p}_i\Big), \qquad \hat{p}_i = \frac{\hat{\sigma}_i}{\sum_j \hat{\sigma}_j}.
\]
\end{definition}

\begin{algorithm}[H]
\caption{Ada-Orth-Dion Rank Selection}
\label{alg:rank_policy}
\begin{algorithmic}[1]
\REQUIRE estimate $e_t$, current rank $r_t$, EMA $\bar{e}_{t-1}$, bounds $r_{\min}, r_{\max}$, smoothing $\alpha$, buffer $\gamma$
\STATE $\bar{e}_t \gets \alpha\, e_t + (1-\alpha)\,\bar{e}_{t-1}$
\STATE $r_{\mathrm{target}} \gets \lceil \gamma\,\bar{e}_t \rceil$
\STATE $r_{t+1} \leftarrow \clip(r_{\mathrm{target}},\; r_{\min},\; r_{\max})$, rounded to multiple of 8
\IF{$r_{t+1} \neq r_t$}
  \STATE Resize $U, V$ to rank $r_{t+1}$ (truncate or pad)
\ENDIF
\end{algorithmic}
\end{algorithm}

\textbf{Connection to contraction condition.}\;
The estimated effective rank is a spectral entropy measure. When the gradient has a sharp spectral gap at position $k$, the effective rank concentrates near $k$ regardless of the nominal $r$. By tracking $\widehat{\erank}$ and adjusting $r_t$, Ada-Dion approximates $r_i^* = \max\{r : \tilde{\gamma}_{i,r}(1+\kappa_{G,i,r}) < 1\}$ without explicit SVD.

\section{Norms and Key Quantities}
\label{app:norms}

For $A \in \R^{m \times n}$ with singular values $\sigma_1(A) \geq \sigma_2(A) \geq \ldots$:

\begin{center}
\small
\begin{tabular}{lll}
\toprule
\textbf{Notation} & \textbf{Name} & \textbf{Definition} \\
\midrule
$\fnorm{A}$ & Frobenius & $\sqrt{\sum_{i,j}A_{ij}^2} = \sqrt{\sum_i \sigma_i^2}$ \\
$\opnorm{A}$ & Operator (spectral) & $\sigma_1(A)$ \\
$\nucnorm{A}$ & Nuclear & $\sum_i \sigma_i(A)$ \\
$\kfnorm{A}{r}$ & Ky Fan $r$-norm & $\sum_{i=1}^r \sigma_i(A)$ \\
$\dualnorm{A}$ & Dual of KF $r$-norm & $\max\{\sigma_1(A),\; \nucnorm{A}/r\}$\,\footnote{The earlier formula $\max\{\sigma_1(A),\,\fnorm{A}/\sqrt{r}\}$ underestimates the dual norm in general. Counterexample: $r=2$, $A=I_3$ gives $\max\{1,\sqrt{3/2}\}=\sqrt{3/2}$ via the old formula, while the correct dual norm is $\max\{1,3/2\}=3/2$, attained by $B=\tfrac12 I_3$ (which has $\kfnorm{B}{2}=1$ and $\ip{A}{B}=3/2$).} \\
$\kappa_r(A)$ & Condition at rank $r$ & $\sigma_1(A)/\sigma_r(A)$ \\
\bottomrule
\end{tabular}
\end{center}

\textbf{Key H\"older identity.}\;
$\ip{A}{B} \leq \kfnorm{A}{r} \cdot \dualnorm{B}$, with equality \emph{attained} at $B=P_r(A)$ (the rank-$r$ polar factor): then $\dualnorm{P_r(A)}=1$ and $\ip{A}{P_r(A)}=\kfnorm{A}{r}$. Equality is not in general unique --- e.g.\ for $r{=}1$ and $A=\diag(1,1)$, both $B_1=\diag(1,0)$ and $B_2=\diag(0,1)$ saturate the bound.

\textbf{Key quantities per iteration.}\;
$\varepsilon_t = \opnorm{U_tU_t^\top - U_{r,t}U_{r,t}^\top}$ (projector error),
$\delta_t = \kfnorm{M_t}{r} - \ip{M_t}{\hat{D}_t}$ (oracle defect),
$\nu_t = \dualnorm{\hat{D}_t}$ (dual norm factor),
$\tilde{\gamma}_t = \sigma_{r+1}(M_t)/\sigma_r(M_t)$ (buffer spectral ratio).

\section{Proof of One-Step Descent (Equation~\ref{eq:one_step})}
\label{app:one_step}

\begin{proof}
\textbf{Step 1 (Smoothness):}
$f(X_{t+1}) \leq f(X_t) + \ip{G_t}{-\eta\hat{D}_t} + \frac{L_r}{2}\dualnorm{\eta\hat{D}_t}^2 = f(X_t) - \eta\ip{G_t}{\hat{D}_t} + \frac{L_r\nu_t^2}{2}\eta^2$.

\textbf{Step 2 (Gradient-buffer decomposition):}
$\ip{G_t}{\hat{D}_t} = \ip{M_t}{\hat{D}_t} - \ip{R_t}{\hat{D}_t}$.

\textbf{Step 3 (LMO defect):}
$\ip{M_t}{\hat{D}_t} = \kfnorm{M_t}{r} - \delta_t$.

\textbf{Step 4 (Triangle inequality):}
$\kfnorm{M_t}{r} \geq \kfnorm{G_t}{r} - \kfnorm{R_t}{r}$.

\textbf{Step 5 (H\"older for $R_t$ term):}
$|\ip{R_t}{\hat{D}_t}| \leq \kfnorm{R_t}{r} \cdot \nu_t$.

\textbf{Assembly:}
$-\ip{G_t}{\hat{D}_t} \leq -\kfnorm{G_t}{r} + \delta_t + (1+\nu_t)\kfnorm{R_t}{r}$.
Substituting into Step~1 completes the proof.
\end{proof}

\section{Orth-Dion Proofs}
\label{app:orth_dion_proofs}

\begin{proof}[Proof of Lemma~\ref{lem:partial_isometry}]
$\hat{D}^\top\hat{D} = \bar{V}U^\top U\bar{V}^\top = \bar{V}\bar{V}^\top$ is an orthogonal projector with eigenvalues $\{0,1\}$.
Therefore $\hat{D}$ has $r$ singular values equal to $1$ and the rest $0$. Hence $\sigma_1(\hat D)=1$ and $\nucnorm{\hat D}=r$, so $\dualnorm{\hat D} = \max\{1,\;\nucnorm{\hat D}/r\} = \max\{1,1\} = 1$.
\end{proof}

\begin{proof}[Proof of Lemma~\ref{lem:nonneg_defect}]
By H\"older: $\ip{M_t}{\hat{D}_t} \leq \kfnorm{M_t}{r}\cdot\dualnorm{\hat{D}_t} = \kfnorm{M_t}{r}\cdot 1$.
So $\delta_t = \kfnorm{M_t}{r} - \ip{M_t}{\hat{D}_t} \geq 0$.

\textbf{Significance}: For ColNorm, $\nu_t > 1$ means $\hat{D}_t$ can violate the H\"older bound, making $\delta_t$ potentially negative and complicating the analysis.
\end{proof}

\begin{proof}[Proof of Lemma~\ref{lem:same_tracking}]
The tracking recursion depends on three quantities:
(1) Column space of $V_{t-1}$: $\col(\orth(W)) = \col(W) = \col(\ColNorm(W))$ for full-rank $W$.
(2) Spectral gap of $M_t$: unchanged between algorithms.
(3) Buffer drift $\opnorm{M_t - M_{t-1}}$: depends on $\eta, \beta$, unchanged.
Therefore $\varepsilon_t$ satisfies the same recursion and bounds.
\end{proof}

\textbf{Defect bound for Orth-Dion.}\;
Under the projector-level bound with Lemma~\ref{lem:same_tracking}:
$0 \leq \delta_t \leq C'_\delta\fnorm{M_t}\cdot\varepsilon_t$ where $C'_\delta = (1+\kappa_G)\sqrt{2r}$.
The constant is the same order as for Stripped Dion; the improvement comes entirely from $\nu_t$.

\section{Self-Consistent Drift Elimination (Full Details)}
\label{app:self_consistent_proof}

\subsection{Why the Original Assumption D Was Circular}

Assumption D required $\opnorm{M_t - M_{t-1}} \leq \Delta_{\mathrm{drift}}$.
But $M_t - M_{t-1} = (G_t - G_{t-1}) + (R_t - R_{t-1})$.
Under Assumption A$'$ (Frobenius gradient-Lipschitz, Appendix~\ref{app:ass_smoothness_F}), the gradient difference is small: $\fnorm{G_{t+1}-G_t} \le L_F \fnorm{X_{t+1}-X_t} = L_F\eta\fnorm{\hat D_t} \le L_F\eta\sqrt{r}$.
The residual difference is \emph{not} small: $R_{t+1} - R_t = \beta\,(I - U_tU_t^\top)M_t - R_t$, so $\opnorm{R_{t+1} - R_t} = O(\|G_t\|)$.
Assumption D is inconsistent with the algorithm's actual dynamics.

\subsection{Gradient Subspace Stability}

\begin{proposition}[Gradient subspace drift]
\label{app:prop_grad_drift}
Under Assumptions A$'$ and B$'$: $\sin\Theta_{\max}(P_G^{t+1}, P_G^t) \leq L_F\,\eta\,\sqrt{r}/\Delta_{\mathrm{gap}}$.
\end{proposition}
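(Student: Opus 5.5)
The plan is to combine a bound on the gradient change across one step with a Wedin-type $\sin\Theta$ perturbation theorem. Here $P_G^t$ denotes the projector onto the top-$r$ left singular subspace of $G_t$, or the right one; the argument is the same for either side.

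\textbf{Step 1: size of the perturbation.} Write $G_{t+1} = G_t + E_t$ with $E_t = \nabla f(X_{t+1}) - \nabla f(X_t)$. Assumption A$'$ gives $\fnorm{E_t} \le L_F\fnorm{X_{t+1}-X_t} = L_F\eta\fnorm{\hat D_t}$. For Orth-Dion, Lemma~\ref{lem:partial_isometry} shows that $\hat D_t$ has exactly $r$ singular values equal to $1$, so $\fnorm{\hat D_t} = \sqrt r$. Hence $\opnorm{E_t} \le \fnorm{E_t} \le L_F\eta\sqrt r$. For ColNorm the same bound holds, because $\fnorm{U_t\bar V_t^\top} = \fnorm{\bar V_t} = \sqrt r$ (unit columns).

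\textbf{Step 2: Wedin's theorem.} Apply Wedin's $\sin\Theta$ theorem to the pair $(G_t, G_{t+1})$, which gives
\[
\sin\Theta_{\max}(P_G^{t+1},P_G^t) \le \frac{\max\{\opnorm{E_t V_r^{t}},\,\opnorm{E_t^\top U_r^{t}}\}}{\delta}.
\]
Here $\delta$ is a separation between $\sigma_r$ of one matrix and $\sigma_{r+1}$ of the other. Both numerator terms are at most $\opnorm{E_t}$ or $\fnorm{E_t}$, so the numerator is at most $L_F\eta\sqrt r$.

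\textbf{Step 3: the gap denominator (the main obstacle).} The natural Wedin gap is $\delta = \sigma_r(G_{t+1}) - \sigma_{r+1}(G_t)$. Weyl's inequality only gives $\delta \ge \Delta_{\mathrm{gap}} - \opnorm{E_t}$, which falls short of the clean $\Delta_{\mathrm{gap}}$ in the statement. I would try the following in order.

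\begin{enumerate}
\item Use the symmetric form of Wedin's bound, in which either matrix may supply the gap. Assumption B$'$ holds at both times $t$ and $t+1$. For the Davis--Kahan/Wedin variant that bounds $\sin\Theta$ through the residual $E_t$ acting on the exact top-$r$ subspace of $G_t$, only the gap between $\sigma_r(G_{t+1})$ and $\sigma_{r+1}(G_t)$ is needed. So it suffices to use a version whose denominator involves one matrix's full gap. Specifically, I would use the Yu--Wang--Samworth-style form, in which the gap of the unperturbed matrix appears, with a factor-$2$ constant that I would try to avoid via the one-sided bound.
\item If that fails, state the result as $L_F\eta\sqrt r/(\Delta_{\mathrm{gap}} - L_F\eta\sqrt r)$. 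For $\eta \le \Delta_{\mathrm{gap}}/(2L_F\sqrt r)$ this is at most $2L_F\eta\sqrt r/\Delta_{\mathrm{gap}}$, which is still $O(\eta)$. That is all the fixed-point argument downstream actually uses.
\end{enumerate}

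Finally, I would note that the bound on $\sin\Theta_{\max}$ uses the operator norm of $E_t$. So the $\sqrt r$ factor could even be sharpened via $\opnorm{\hat D_t}=1$ if A$'$ were stated in operator norm. With the Frobenius hypothesis, however, the $\sqrt r$ is the honest constant.
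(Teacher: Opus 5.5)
Your Steps 1--2 match the paper's proof. Assumption A$'$ gives $\fnorm{G_{t+1}-G_t}\le L_F\eta\fnorm{\hat D_t}\le L_F\eta\sqrt r$, and Wedin's theorem finishes. Your unit-column argument for $\fnorm{\hat D_t}=\sqrt r$ under ColNorm is more accurate than the paper's ``each singular value at most $1$''. The paper simply invokes Wedin ``with eigengap $\Delta_{\mathrm{gap}}$''. You rightly notice that Wedin's denominator is a \emph{mixed} gap, but you never close this, so your proposal does not establish the stated constant.

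Option~2 proves only the weaker bound $L_F\eta\sqrt r/(\Delta_{\mathrm{gap}}-L_F\eta\sqrt r)$ under a step-size restriction, which is a different statement. Option~1, as you formulate it (a denominator involving ``one matrix's full gap''), cannot give constant $1$. With only the unperturbed matrix's gap, the bound $\opnorm{E}/\Delta$ is false. Take $r=1$, $A=\diag(1,0)$ (gap $1$) and $\tilde A=\diag(\tfrac12-\epsilon,\tfrac12+\epsilon)$. Then $\opnorm{\tilde A-A}=\tfrac12+\epsilon$, yet the top singular directions are orthogonal. This is why the Yu--Wang--Samworth form carries a factor $2$, which is unavoidable when only one gap is known.

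The missing idea is to use Assumption B$'$ at \emph{both} $t$ and $t+1$. Wedin with $(A,\tilde A)=(G_t,G_{t+1})$ has denominator $\delta_1=\sigma_r(G_{t+1})-\sigma_{r+1}(G_t)$. With the roles swapped it is $\delta_2=\sigma_r(G_t)-\sigma_{r+1}(G_{t+1})$. In either orientation the residuals are $E_t$ (or $E_t^\top$) applied to exact singular vectors of $\tilde A$, so they are bounded by $\opnorm{E_t}$. Also, $\sin\Theta_{\max}$ between two $r$-dimensional subspaces is symmetric. Since
\[
\delta_1+\delta_2=\bigl[\sigma_r(G_{t+1})-\sigma_{r+1}(G_{t+1})\bigr]+\bigl[\sigma_r(G_t)-\sigma_{r+1}(G_t)\bigr]\ge 2\Delta_{\mathrm{gap}},
\]
one of $\delta_1,\delta_2$ is at least $\Delta_{\mathrm{gap}}$. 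Choosing that orientation gives $\sin\Theta_{\max}(P_G^{t+1},P_G^t)\le\opnorm{E_t}/\Delta_{\mathrm{gap}}\le L_F\eta\sqrt r/\Delta_{\mathrm{gap}}$, with no restriction on $\eta$. With this line inserted in Step~3, your argument proves the proposition as stated, and it makes explicit what the paper's one-line appeal to Wedin leaves implicit.
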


\begin{proof}
By A$'$: $\fnorm{G_{t+1} - G_t} \leq L_F\,\fnorm{X_{t+1}-X_t} = L_F\eta\fnorm{\hat D_t} \leq L_F\eta\sqrt{r}$, where the last step uses $\fnorm{\hat D_t}\leq\sqrt{r}$ (every $\hat D_t$ has at most $r$ singular values, each at most $1$). Wedin's $\sin\Theta$ theorem with eigengap $\Delta_{\mathrm{gap}}$ then yields the stated bound.

\emph{Remark on the constant.} An earlier version of this proof attempted to derive a Frobenius gradient-Lipschitz bound from Assumption A. That step is not valid: A is a one-sided function-value smoothness inequality in the KF dual norm and does not imply a Frobenius gradient-Lipschitz estimate. We therefore introduce A$'$ as a separate assumption and state the drift bound with constant $L_F\sqrt{r}$, not $L_r\sqrt{r}$. This substitution propagates through all downstream uses of the drift bound (in particular, the fixed-point map below).
\end{proof}

\subsection{Buffer Spectral Gap Transfer}

\begin{proposition}
Under Assumptions B', C', if $\opnorm{R_t} \leq \Delta_{\mathrm{gap}}/2$:
(1) $\sigma_r(M_t) \geq \sigma_{\min} - \Delta_{\mathrm{gap}}/2 > 0$;
(2) $\tilde{\gamma}_{\mathrm{eff}} \leq (\sigma_{r+1}(G_t) + \opnorm{R_t})/(\sigma_r(G_t) - \opnorm{R_t})$;
(3) Buffer gap: $\sigma_r(M_t) - \sigma_{r+1}(M_t) \geq \Delta_{\mathrm{gap}} - 2\opnorm{R_t}$.
All follow from Weyl's inequality.
\end{proposition}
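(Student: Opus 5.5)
The plan is to treat $M_t = G_t + R_t$ as an additive perturbation of the gradient by the residual, and apply Weyl's inequality for singular values. For any index $i$ this inequality gives
\[
\bigl|\sigma_i(M_t) - \sigma_i(G_t)\bigr| \leq \opnorm{R_t}.
\]
All three claims then come from applying this at $i=r$ and $i=r+1$ and combining with Assumption B$'$ and the hypothesis $\opnorm{R_t}\leq\Delta_{\mathrm{gap}}/2$.

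\emph{Claim (1).} Apply Weyl at $i=r$ to get $\sigma_r(M_t) \geq \sigma_r(G_t) - \opnorm{R_t}$. Then use $\sigma_r(G_t)\geq\sigma_{\min}$ from B$'$ and $\opnorm{R_t}\leq\Delta_{\mathrm{gap}}/2$, giving $\sigma_r(M_t)\geq\sigma_{\min}-\Delta_{\mathrm{gap}}/2$.

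For strict positivity, I would argue through the gap part of B$'$ rather than rely on the stated constant alone. Since $\sigma_{r+1}(G_t)\geq 0$, B$'$ gives $\sigma_r(G_t)\geq\Delta_{\mathrm{gap}}$. Hence
\[
\sigma_r(M_t)\;\geq\;\Delta_{\mathrm{gap}}-\Delta_{\mathrm{gap}}/2\;=\;\Delta_{\mathrm{gap}}/2\;>\;0.
\]
This also shows $\sigma_{\min}$ may be taken to be at least $\Delta_{\mathrm{gap}}$ without loss of generality, which is what makes $\sigma_{\min}-\Delta_{\mathrm{gap}}/2>0$ hold.

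\emph{Claim (2).} Here $\tilde{\gamma}_{\mathrm{eff}}=\sigma_{r+1}(M_t)/\sigma_r(M_t)$. I would bound the numerator from above by Weyl at $i=r+1$, namely $\sigma_{r+1}(M_t)\leq\sigma_{r+1}(G_t)+\opnorm{R_t}$. I would bound the denominator from below by Weyl at $i=r$, namely $\sigma_r(M_t)\geq\sigma_r(G_t)-\opnorm{R_t}$.

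Dividing is legitimate because the lower bound is strictly positive by the argument in (1). Since the ratio is monotone in both bounds (numerator increasing, positive denominator decreasing), the stated inequality follows.

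\emph{Claim (3).} Subtract the two Weyl bounds:
\[
\sigma_r(M_t)-\sigma_{r+1}(M_t)\;\geq\;\bigl(\sigma_r(G_t)-\sigma_{r+1}(G_t)\bigr)-2\opnorm{R_t}\;\geq\;\Delta_{\mathrm{gap}}-2\opnorm{R_t}.
\]
The last step uses the gap part of B$'$. Under the hypothesis this quantity is nonnegative.

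There is no real obstacle in this argument. The only point needing care is the positivity in (1): it must be drawn from $\sigma_r(G_t)\geq\Delta_{\mathrm{gap}}$ (or from $\sigma_{\min}>\Delta_{\mathrm{gap}}/2$), and it is needed so that the division in (2) is valid. Assumption C$'$ is not actually used except to guarantee that the quantities involved are finite.
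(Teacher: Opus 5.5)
Your proof is correct and follows the paper's route: the paper's entire justification is Weyl's inequality $|\sigma_i(G_t+R_t)-\sigma_i(G_t)|\le\opnorm{R_t}$ at $i=r$ and $i=r+1$, which is exactly what you apply. Your extra care in (1) is warranted, because B$'$ does not force $\sigma_{\min}>\Delta_{\mathrm{gap}}/2$ for every admissible $\sigma_{\min}$; deriving $\sigma_r(G_t)\ge\Delta_{\mathrm{gap}}$ from the gap condition (equivalently, replacing $\sigma_{\min}$ by $\max\{\sigma_{\min},\Delta_{\mathrm{gap}}\}$) gives $\sigma_r(M_t)\ge\Delta_{\mathrm{gap}}/2>0$ and makes the division in (2) legitimate.
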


\subsection{Self-Consistent Residual Bound}

With $\beta = 1$ and projector error $\varepsilon_t$, the residual update yields the operator-norm bound
\[
\opnorm{R_{t+1}} \;\le\; \sigma_{r+1}(M_t) + \varepsilon_t\cdot\sigma_1(M_t).
\]
This decomposition has two components: the spectral \emph{tail} $\sigma_{r+1}(M_t)$ (a property of the buffer's low-rank approximation error, controlled by Assumption B$''$ together with Weyl-type buffer-vs-gradient comparison), and the tracking-leakage term $\varepsilon_t\sigma_1(M_t)$ (controlled by the contraction $\rho'<1$).

\emph{Status (added 2026-04-15).} Assumption B$'$ controls only the gradient gap, not the tail; even with perfect tracking ($\varepsilon_t=0$) the floor $\sigma_{r+1}(M_t)$ remains, and Weyl gives only $\sigma_{r+1}(M_t) \le \sigma_{r+1}(G_t) + \opnorm{R_t}$. The advertised conclusion $R_\infty = O(\eta)$ therefore requires the additional small-tail Assumption B$''$ (Appendix~\ref{app:ass_tail}); with $\sigma_{r+1}(G_t)\leq \tau$, the recursion has a fixed point $R_\infty = O(\tau+\eta)$, which collapses to $O(\eta)$ when $\tau=O(\eta)$. When B$''$ fails (e.g.\ on heavy-tailed gradients), the conclusion weakens to a tail floor of order $\tau$ in Theorem~\ref{thm:orth_dion}.

\subsection{The Fixed Point}

\begin{proposition}[Self-consistent fixed point]
\label{prop:self_consistent}
Assume A, A$'$, B$'$, B$''$ (with tail $\tau$), C$'$, and let $\rho' = \tilde{\gamma}_{\mathrm{eff}}(1+\kappa_G) < 1$. For $\eta = O(\Delta_{\mathrm{gap}}/(L_F\sqrt{r}\kappa_G))$, the coupled system
\begin{align*}
\varepsilon'_\infty &= \frac{\tilde{\gamma}_{\mathrm{eff}} \cdot D_{\mathrm{eff}}}{1 - \rho'}, \quad
R_\infty \leq \underbrace{\sigma_{r+1}(M)}_{\le\,\tau\,+\,\opnorm{R_\infty}\,(\text{B$''$+Weyl})} + \varepsilon'_\infty\sigma_1, \\
D_{\mathrm{eff}} &= \frac{L_F\eta\sqrt{r} + R_\infty}{\Delta_{\mathrm{gap}} - 2R_\infty}, \quad
\tilde{\gamma}_{\mathrm{eff}} = \frac{\sigma_{r+1}(G) + R_\infty}{\sigma_r(G) - R_\infty}
\end{align*}
has a unique small fixed point with $R_\infty = O(\tau + \eta)$ and $\varepsilon'_\infty = O(\eta)$. In particular, when $\tau = O(\eta)$, both quantities are $O(\eta)$.
\end{proposition}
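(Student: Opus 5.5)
The plan is to recast the coupled system as a fixed-point problem for a single scalar map and apply the Banach (or monotone/intermediate-value) fixed-point argument, as sketched in Section~\ref{sec:self_consistent}. Write $x = R_\infty$ (an operator-norm level) and define, for $x\in[0,x_{\max}]$ with $x_{\max}:=\min\{\Delta_{\mathrm{gap}}/4,\ \sigma_{\min}/4\}$, the quantities
\[
\tilde{\gamma}(x) = \frac{\sigma_{r+1}(G)+x}{\sigma_r(G)-x},\qquad D(x)=\frac{L_F\eta\sqrt{r}+x}{\Delta_{\mathrm{gap}}-2x},\qquad \varepsilon(x)=\frac{\tilde\gamma(x)D(x)}{1-\tilde\gamma(x)(1+\kappa_G)}.
\]
Then set $\Psi(x) := \tau + x' + \varepsilon(x)\sigma_1$, where $x'$ is the Weyl-inherited tail contribution. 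The Weyl term $\opnorm{R_\infty}$ inside $\sigma_{r+1}(M)$ cannot literally be kept with coefficient $1$, since it would make the map non-contractive. I would therefore first sharpen the residual bound: $R_{t+1}=(I-U_tU_t^\top)M_t$ lies essentially in the complement of the tracked subspace. Its operator norm is then bounded by $\sigma_{r+1}(G_t)$ plus the part of $R_t$ orthogonal to the tracked subspace, which is itself $O(\varepsilon)$ away from $R_t$'s own support, plus leakage $\varepsilon\sigma_1$.

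The key steps, in order, are as follows.

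\emph{Step 1: domain.} On $[0,x_{\max}]$, use the Buffer Spectral Gap Transfer proposition to check that all denominators are bounded below (by $\Delta_{\mathrm{gap}}/2$ and $3\sigma_{\min}/4$). Using $\rho'<1$ with some slack, the denominator $1-\tilde\gamma(x)(1+\kappa_G)\ge(1-\rho')/2$ stays positive, so $\varepsilon(x)$ is well defined, smooth and increasing.

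\emph{Step 2: self-map.} Show $\Psi([0,x_{\max}])\subseteq[0,x_{\max}]$. Write $\varepsilon(x)\le C_1(L_F\eta\sqrt{r}+x)$ with $C_1 = 2\tilde\gamma(x_{\max})/((1-\rho')\Delta_{\mathrm{gap}})$. The step-size condition $\eta=O(\Delta_{\mathrm{gap}}/(L_F\sqrt{r}\kappa_G))$ and $\tau$ small then make $\Psi(x_{\max})\le x_{\max}$.

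\emph{Step 3: contraction.} Bound $\sup|\Psi'|<1$ by differentiating the rational functions. The dominant derivative is $C_1\sigma_1$ together with the Weyl coefficient. I would need the effective coefficient $\sigma_1\tilde\gamma/((1-\rho')\Delta_{\mathrm{gap}})$ plus tail-feedback to be $<1$, and would impose this through $\rho'$ (note $\tilde\gamma\sigma_1\lesssim\tilde\gamma\kappa_G\sigma_r$, which $\rho'<1$ partially controls).

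\emph{Step 4: rates.} With the fixed point $x^\star$, solving the contraction inequality gives $x^\star\le(\tau+C_1\sigma_1L_F\eta\sqrt{r})/(1-\mathrm{Lip}(\Psi)) = O(\tau+\eta)$. Substituting back, $\varepsilon'_\infty=\varepsilon(x^\star)=O(\eta+x^\star)$. The statement claims $O(\eta)$ unconditionally, but honestly this yields $O(\tau+\eta)$ and only collapses to $O(\eta)$ under $\tau=O(\eta)$, which is the ``in particular'' clause.

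The main obstacle is Step 3. The Weyl bound $\sigma_{r+1}(M)\le\tau+\opnorm{R_\infty}$ feeds $R_\infty$ back with unit coefficient, and the leakage term $\varepsilon\sigma_1$ has derivative of order $\sigma_1/\Delta_{\mathrm{gap}}$, which is not obviously below $1$. My first attempt is to avoid Weyl on the tail by exploiting that $R_\infty$ lives (up to $O(\varepsilon)$) in the orthogonal complement of the tracked top-$r$ subspace. This gives $\sigma_{r+1}(M)\le\max\{\tau,\opnorm{R_\infty}\}+O(\varepsilon\sigma_1)$, a max rather than a sum, so the fixed-point equation becomes $x=\max\{\tau,x\}$-type plus small terms. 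A unique small fixed point then follows by taking the smallest fixed point of a monotone map (Tarski/iteration from $0$) rather than insisting on global contraction. Failing that, I would add an explicit hypothesis $\sigma_1\tilde\gamma_{\mathrm{eff}}/((1-\rho')\Delta_{\mathrm{gap}})<1/2$ as a smallness condition absorbed into ``$\rho'<1$ with margin.''
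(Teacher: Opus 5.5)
Your overall strategy, Banach iteration on a scalar map in $x=R_\infty$, is the paper's. Your diagnosis is also correct: if the Weyl term is kept literally, $\Psi(x)=\tau+x+\sigma_1\varepsilon(x)$ has slope at least $1$ and satisfies $\Psi(x)>x$ everywhere. The genuine gap is that your repair in Step~3 does not work.

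First, the bound $\sigma_{r+1}(M)\le\max\{\tau,\opnorm{R_\infty}\}+O(\varepsilon\sigma_1)$ is false in general. The residual and the gradient tail both live in the complement of the tracked subspace and can be aligned. For example, take a constant gradient $G=\sum_{i\le r}\sigma_i u_iv_i^\top+\tau\,u_{r+1}v_{r+1}^\top$, with $V_{-1}$ spanning $v_1,\dots,v_r$ and $\beta=1$. Then:
\begin{itemize}
\item $U_t$ spans $u_1,\dots,u_r$ exactly, so every tracking error is $0$;
\item yet $R_t=t\tau\,u_{r+1}v_{r+1}^\top$ and $\sigma_{r+1}(M_t)=\tau+\opnorm{R_t}$ whenever $(t+1)\tau<\sigma_r$.
\end{itemize}
Your own sharpened estimate ($\sigma_{r+1}(G_t)$ plus the part of $R_t$ outside the tracked subspace) is again the unit-coefficient recursion.

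Second, even granting the max form, $\Psi(x)=\max\{\tau,x\}+c\,\sigma_1\varepsilon(x)$ still satisfies $\Psi(x)>x$ for every $x$ when $\tau>0$. For $x\ge\tau$ the excess is $c\,\sigma_1\varepsilon(x)>0$, and for $x<\tau$ one has $\Psi(x)\ge\tau>x$. Read as an equation it has no solution; read as an inequality it is vacuous. Step~2's self-map property fails at $x_{\max}$, and there is no least fixed point for Tarski or iteration from $0$ to find. Your fallback hypothesis bounds only the leakage slope and leaves the Weyl slope equal to $1$.

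The paper's sketch obtains contraction only by not re-feeding $\opnorm{R}$ into the tail floor. It bounds $\sigma_{r+1}(M)$ by $\tau$ via B$''$ at the first iterate. Its claim that later iterates move by only $O(\eta^2)$ holds only if that floor stays $\tau$, i.e.\ for the map $x\mapsto\tau+\sigma_1\varepsilon(x)$.

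For that map, the leakage slope is small without any extra hypothesis, which is the second thing you miss. Since $\tilde\gamma(x)\le(\tau+x)/(\sigma_r(G)-x)$ and $D(x)=O(\eta+x)$, on an interval $[0,K(\tau+\eta)]$ both factors of $\varepsilon(x)$ are $O(\tau+\eta)$ with $O(1)$ derivatives. Hence $\sigma_1\varepsilon'(x)=O(\tau+\eta)$ there, and $\Psi$ maps that interval into itself. This is the paper's ``Lipschitz with constant $O(\eta)$'' once $\tau=O(\eta)$.

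Your constant $C_1$ is of order one only because you evaluate $\tilde\gamma$ at $x_{\max}\asymp\Delta_{\mathrm{gap}}$. There, even Step~1's bound $1-\tilde\gamma(x)(1+\kappa_G)\ge(1-\rho')/2$ is not guaranteed by $\rho'<1$.

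To close the argument you therefore need two things:
\begin{itemize}
\item an invariant interval of width $O(\tau+\eta)$;
\item a tail floor equal to $\tau$ itself rather than $\tau+\opnorm{R_\infty}$.
\end{itemize}
The second is an extra assumption on the tail of the buffer $M$, not of $G$. Neither Weyl nor your orthogonality argument supplies it, and the example above shows it fails for the actual $\beta=1$ recursion.
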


\begin{proof}
Banach iteration from $(0,0)$. First iterate: $D_{\mathrm{eff}}^{(0)} = L_F\eta\sqrt{r}/\Delta_{\mathrm{gap}} = O(\eta)$ (uses A$'$), $\tilde{\gamma}_{\mathrm{eff}}^{(0)} = \tilde{\gamma}_G$, $\varepsilon'^{(0)} = O(\eta)$, $R^{(1)} \leq \tau + O(\eta)$ (using B$''$ to bound the tail floor). Each subsequent iterate perturbs by $O(\eta^2)$. The map is Lipschitz with constant $O(\eta)$ near the fixed point, hence contractive.

\emph{What this proves precisely.} Under A, A$'$, B$'$, B$''$, C$'$, the coupled system has a fixed point with $R_\infty,\varepsilon'_\infty$ as stated, on any finite horizon over which the assumptions hold. Assumption D is \emph{derived} in this regime, not posited. Outside this regime --- in particular when B$''$ fails --- the drift is not eliminated; it is re-expressed in terms of the tail $\tau$, which can be large.
\end{proof}

\section{Amortized Contraction (Full Proof)}
\label{app:amortized_proof}

\begin{theorem}[Amortized contraction]
\label{thm:amortized}
Let $\rho_t = \tilde{\gamma}_t(1+\kappa_{G,t})$. Suppose there exist constants $\bar{\rho}\in(0,1)$ and $C_\rho \geq 1$ such that the \emph{suffix-product condition} holds: for every $0\leq s < t$,
\[
\prod_{i=s}^{t-1} \rho_i \;\leq\; C_\rho\,\bar{\rho}^{\,t-s}.
\]
Then the conclusion of Theorem~\ref{thm:orth_dion} holds with $\rho'$ replaced by $\bar{\rho}$ and an additional multiplicative constant $C_\rho$ in the tracking-error and residual-coupling contributions.
\end{theorem}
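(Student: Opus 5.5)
The plan is to redo the proof of Theorem~\ref{thm:orth_dion} with the uniform contraction replaced by the suffix-product bound. The only place $\rho'<1$ enters that proof is the tracking-error recursion. There it produces geometric decay of $\varepsilon_t$ to a bias floor, and hence bounds on the defect $\delta_t\le C'_\delta\fnorm{M_t}\varepsilon_t$ and on the residual coupling term $(1+\nu_t)\kfnorm{R_t}{r}$. So I would isolate that recursion, solve it under the amortized condition, and check that the downstream bounds change only by the factor $C_\rho$.

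\textbf{Step 1 (unrolling).} Start from $\varepsilon_t\le\rho_{t-1}\varepsilon_{t-1}+c_{t-1}$. Here $c_{t}$ collects the gradient-subspace drift $L_F\eta\sqrt{r}/\Delta_{\mathrm{gap}}$ from Proposition~\ref{app:prop_grad_drift} together with the residual-induced perturbation $D_{\mathrm{eff}}$, so $c_t\le\bar c=O(\eta)$ once residuals are $O(\tau+\eta)$. Induction gives
\[
\varepsilon_t\le\Big(\prod_{i=0}^{t-1}\rho_i\Big)\varepsilon_0+\sum_{s=0}^{t-1}\Big(\prod_{i=s+1}^{t-1}\rho_i\Big)c_s .
\]
Applying the suffix-product condition to each product (an empty product is $1\le C_\rho$) gives
\[
\varepsilon_t\le C_\rho\bar\rho^{\,t}\varepsilon_0+\frac{C_\rho\,\bar c}{1-\bar\rho}.
\]
This matches the uniform bound $\rho'^{\,t}\varepsilon_0+\bar c/(1-\rho')$, with $\rho'$ replaced by $\bar\rho$ and a factor $C_\rho$ in front. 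The sum $\sum_t\varepsilon_t$ over the horizon is then at most $C_\rho\varepsilon_0/(1-\bar\rho)+TC_\rho\bar c/(1-\bar\rho)$.

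\textbf{Step 2 (closing the self-consistent loop).} Rerun the fixed-point argument of Proposition~\ref{prop:self_consistent} with $\varepsilon'_\infty=C_\rho\tilde\gamma_{\mathrm{eff}}D_{\mathrm{eff}}/(1-\bar\rho)$. Feed this into $\opnorm{R_{t+1}}\le\sigma_{r+1}(M_t)+\varepsilon_t\sigma_1(M_t)$, which yields $R_\infty=O(\tau+C_\rho\eta)$. I expect this step to be the main obstacle. The $\rho_i$ themselves depend on $\opnorm{R_i}$ through $\tilde\gamma_{\mathrm{eff}}$, so the suffix-product hypothesis must be read along the actual trajectory. In addition, transient $\rho_t>1$ can make $\varepsilon_t$, and hence $R_t$, temporarily as large as $C_\rho$ times its floor, and this could break the standing requirement $\opnorm{R_t}\le\Delta_{\mathrm{gap}}/2$ used for the gap transfer.

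I would handle this with a bootstrap induction on $t$. The hypothesis is that $\opnorm{R_s}\le C_\rho(\tau+K\eta)\le\Delta_{\mathrm{gap}}/2$ for all $s<t$. Under that hypothesis the Weyl-based gap-transfer proposition applies, Step 1's bound holds up to time $t$, and the residual recursion propagates the bound to $R_t$. Closing the induction requires $\eta$ small enough, roughly $\eta=O(\Delta_{\mathrm{gap}}/(C_\rho L_F\sqrt{r}\kappa_G))$, and $C_\rho\tau$ small. This is exactly the extra multiplicative $C_\rho$ permitted in the statement.

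\textbf{Step 3 (descent).} Sum \eqref{eq:one_step} with $\nu_t=1$ (Lemma~\ref{lem:partial_isometry}) and $\delta_t\ge0$ (Lemma~\ref{lem:nonneg_defect}) over $t<T$. Use $\delta_t\le C'_\delta G_F(1+\cR_t)\varepsilon_t$ together with the Step 2 bound on $\kfnorm{R_t}{r}\le r\opnorm{R_t}$. The transient part contributes $O(C_\rho\varepsilon_0/(1-\bar\rho))$, which is independent of $T$, and the floor contributes $O(C_\rho\eta)$ per step. Dividing by $\eta T$ with $\eta=c/\sqrt T$ and optimizing $c$ leaves the leading term $\sqrt{2(f_0-f_\infty)L_r/T}$, which does not depend on $\rho$ at all. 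The remaining terms are $O(1/T)$-type contributions, together with $O(\tau)$ as in the main theorem, whose constants now carry $C_\rho/(1-\bar\rho)$ in place of $1/(1-\rho')$. That is exactly the stated conclusion.
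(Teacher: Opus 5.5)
Your Step~1 is exactly the paper's argument. You unroll $\varepsilon_t\le\rho_{t-1}\varepsilon_{t-1}+c_{t-1}$, bound the initial product and every partial product by the suffix-product condition, and get $C_\rho\bar c/(1-\bar\rho)$ where the uniform case had $\bar c/(1-\rho')$. The paper stops there and asserts that the rest of the proof of Theorem~\ref{thm:orth_dion} goes through with $1/(1-\rho')$ replaced by $C_\rho/(1-\bar\rho)$. Your Steps~2--3 try to verify that remainder, and two claims there fail as written. In Step~3, a per-step floor of $O(C_\rho\eta)$ in $\delta_t+2\kfnorm{R_t}{r}$ contributes $O(C_\rho T\eta^2)$ to the telescoped error sum. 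After dividing by $\eta T$ that leaves $O(C_\rho\eta)=O(C_\rho c/\sqrt{T})$, which is the same order as the smoothness term $L_r c/(2\sqrt{T})$, not $O(1/T)$. Only the transient contribution $C_\rho\varepsilon_0/(1-\bar\rho)$ is genuinely $O(1/T)$. Optimizing $c$ then gives a leading term of the form $\sqrt{2(f_0-f_\infty)(L_r+2C)/T}$, with $C$ growing with $C_\rho/(1-\bar\rho)$. So your conclusion that the leading term ``does not depend on $\rho$ at all'' does not follow. The same slip occurs in Step~3 of the paper's proof of Theorem~\ref{thm:orth_dion}: dividing $(\frac{L_r}{2}+C)T\eta^2$ by $\eta T$ leaves $C\eta$, which is recorded as $O(1/T)$. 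It matters here because this is precisely where the amortized constants land.

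In Step~2, the bootstrap does not close. With $\beta=1$, the only control on the tail floor available from the stated assumptions is Weyl, $\sigma_{r+1}(M_t)\le\tau+\opnorm{R_t}$. The residual bound $\opnorm{R_{t+1}}\le\sigma_{r+1}(M_t)+\varepsilon_t\sigma_1(M_t)$ therefore becomes $\opnorm{R_{t+1}}\le\tau+\opnorm{R_t}+\varepsilon_t\sigma_1(M_t)$, with coefficient $1$ on $\opnorm{R_t}$. From the hypothesis $\opnorm{R_{t-1}}\le C_\rho(\tau+K\eta)$ you only get $\opnorm{R_t}\le C_\rho(\tau+K\eta)+\tau+\varepsilon_{t-1}\sigma_1(M_{t-1})$. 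That exceeds the hypothesis, and iterating gives a bound that grows linearly in $t$. This is not merely a weak estimate: with a coherent tail the residual really accumulates as $R_T\approx\sum_t(I-P_t)G_t$, as the paper itself says in Regime~1. Closing the loop requires a contraction on the residual that the theorem's hypotheses do not provide. One option is $\beta<1$, for which $\opnorm{R_{t+1}}\le\beta(\tau+\opnorm{R_t}+\varepsilon_t\sigma_1(M_t))$ has a floor of order $\beta(\tau+\varepsilon\sigma_1)/(1-\beta)$. Another is a direct assumption bounding the buffer tail $\sigma_{r+1}(M_t)$. The paper's fixed-point proposition has the same circularity: its displayed bound amounts to $R_\infty\le\tau+\opnorm{R_\infty}+\varepsilon'_\infty\sigma_1$, which is vacuous. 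So the paper's ``rest goes through'' does not supply this step either. Two of your observations are correct and are not addressed in the paper's proof: the $\rho_i$ must be read along the actual trajectory, and transient $\rho_t>1$ threatens the requirement $\opnorm{R_t}\le\Delta_{\mathrm{gap}}/2$.
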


\begin{proof}
The tracking recursion with variable $\rho_t$ is $\varepsilon_t \le \rho_{t-1}\varepsilon_{t-1} + c_{t-1}$. Unrolling,
\[
\varepsilon_T \;\le\; \Big(\prod_{i=0}^{T-1}\rho_i\Big)\varepsilon_0 \;+\; \sum_{s=0}^{T-1} c_s\,\Big(\prod_{i=s+1}^{T-1}\rho_i\Big).
\]
Apply the suffix-product condition twice. \emph{Initial term:} $\prod_{i=0}^{T-1}\rho_i \le C_\rho\,\bar{\rho}^{T} \to 0$. \emph{Sum term:} each partial product $\prod_{i=s+1}^{T-1}\rho_i \le C_\rho\,\bar{\rho}^{T-1-s}$, so $\sum_{s} c_s\prod_{i=s+1}^{T-1}\rho_i \le C_\rho\,\frac{\max_s c_s}{1-\bar{\rho}}$. Both terms are uniformly controlled, and the rest of the proof of Theorem~\ref{thm:orth_dion} (Appendix~\ref{app:main_proof}) goes through with $1/(1-\rho')$ replaced by $C_\rho/(1-\bar{\rho})$.

\emph{Why the suffix-product condition (not the time-averaged $\bar{\rho}=\exp((1/T)\sum_t\log\rho_t)$) is required.}
A condition on the \emph{full-horizon} geometric mean does not control \emph{suffix} products: a sequence with $\rho_t<1$ for $t<T/2$ and $\rho_t>1$ for $t\ge T/2$ can have full-horizon mean $<1$ while every suffix product over $[T/2,T)$ grows. The earlier statement ``$\bar{\rho}=\exp((1/T)\sum_t\log\rho_t)<1$ implies each partial product is bounded by a power of $\bar{\rho}$ by AM-GM'' is therefore not valid as written. The strong suffix-product condition above is exactly the body's Section~\ref{sec:amortized} statement and is the natural Lyapunov-style stability condition for subspace tracking.
\end{proof}

\section{Complete Proof of Theorem~\ref{thm:orth_dion}}
\label{app:main_proof}

\begin{proof}
\textbf{Step 1 (Telescoping).}\;
Sum the Orth-Dion descent bound over $t = 0, \ldots, T{-}1$:
\[
\sum_t \eta\kfnorm{G_t}{r} \leq (f_0 - f_\infty) + \sum_t \eta(\delta_t + 2\kfnorm{R_t}{r}) + \tfrac{L_r}{2}T\eta^2.
\]

\textbf{Step 2 (Error terms, $\beta{=}1$).}\;
By the self-consistent analysis: $\varepsilon'_\infty = O(\eta)$.
Defect: $\delta_t \leq C'_\delta G_F \varepsilon_t$.
Residual: $\kfnorm{R_{t+1}}{r} \leq C_R G_F \varepsilon_t$.
Error sum splits: $\sum_t\varepsilon_t \leq \varepsilon_0/(1{-}\rho') + T\varepsilon'_\infty = O(1) + O(T\eta)$.
Therefore $\sum_t\eta(\delta_t + 2\kfnorm{R_t}{r}) = O(T\eta^2)$.

\textbf{Step 3 (Assembly).}\;
$\eta T\min_t\kfnorm{G_t}{r} \leq (f_0-f_\infty) + (\frac{L_r}{2}+C)T\eta^2$.
Dividing: $\min_t\kfnorm{G_t}{r} \leq \frac{f_0-f_\infty}{\eta T} + \frac{L_r\eta}{2} + O(1/T)$.

\textbf{Step 4 (Optimize $\eta = c/\sqrt{T}$).}\;
$\min_c[\frac{f_0-f_\infty}{c} + \frac{L_r c}{2}]$ at $c^* = \sqrt{2(f_0-f_\infty)/L_r}$.
Substituting: $\min_t\kfnorm{G_t}{r} \leq \sqrt{2(f_0-f_\infty)L_r/T} + O(1/T)$.
\end{proof}

\section{Convergence Rate Comparison}
\label{app:rate_comparison}

The exact-Muon column in Table~\ref{tab:rate_comparison} should be read as a geometry-level comparison in the rank-$r$ Ky Fan setting. It is complementary to recent full-rank Muon analyses, which prove convergence for practical Muon variants with Nesterov momentum and weight decay and derive critical batch sizes~\citep{sato2025muoncbs}. Our focus is instead the low-rank spectral update geometry and the rank-dependent dual-norm inflation caused by column normalization.

\begin{table}[h]
\centering
\caption{Complete rate comparison across algorithm variants.}
\label{tab:rate_comparison}
\small
\begin{tabular}{lccc}
\toprule
\textbf{Property} & \textbf{Stripped Dion} & \textbf{Orth-Dion} & \textbf{Exact Muon} \\
\midrule
$\nu_t$ & $\leq \sqrt{r}$ & $= 1$ & $= 1$ \\
$\delta_t$ & Signed & $\geq 0$ & $= 0$ \\
Smoothness/step & $\frac{L_r r}{2}\eta^2$ & $\frac{L_r}{2}\eta^2$ & $\frac{L_r}{2}\eta^2$ \\
Residual coupling & $(1{+}\sqrt{r})|R|$ & $2|R|$ & $0$ \\
\textbf{Rate} & $O(\sqrt{L_r r/T})$ & $O(\sqrt{L_r/T})$ & $O(\sqrt{L_r/T})$ \\
Iterations to $\varepsilon$ & $O(L_r r/\varepsilon^2)$ & $O(L_r/\varepsilon^2)$ & $O(L_r/\varepsilon^2)$ \\
Per-step cost & $O(mnr)$ & $O(mnr){+}O(nr^2)$ & $O(mn\min\{m,n\})$ \\
\bottomrule
\end{tabular}
\end{table}

\section{Contraction Condition Comparison}
\label{app:contraction}

The convergence proof requires the per-step buffer-tracking contraction rate $\tilde{\gamma}$ to lie strictly below an admissibility ceiling that depends on how tightly the residual amplification can be controlled. The original Dion ceiling scales as $1/(1+2\sqrt{2r}\kappa)$, so it shrinks as the rank grows. Our analysis replaces this with the rank-independent ceiling $1/(1+\kappa)$, which depends only on the conditioning of the gradient covariance. Table~\ref{tab:contraction_compare} reports both ceilings at the representative regime $r{=}64$, $\kappa{=}10$ used throughout our LLM experiments: the original bound forces $\tilde{\gamma}<0.0044$ while ours permits $\tilde{\gamma}<0.091$, a $22.6\times$ relaxation.

\begin{table}[h]
\centering
\caption{Contraction condition at $r{=}64$, $\kappa{=}10$. Our Ky Fan dual-norm analysis (Section~\ref{sec:theory}) relaxes the original Dion bound by $22.6\times$.}
\label{tab:contraction_compare}
\small
\begin{tabular}{lccc}
\toprule
\textbf{Framework} & \textbf{Condition} & $r{=}64$ & \textbf{Relaxation} \\
\midrule
Original Dion & $\tilde{\gamma} < 1/(1+2\sqrt{2r}\kappa)$ & $\tilde{\gamma} < 0.0044$ & --- \\
Ours & $\tilde{\gamma} < 1/(1+\kappa)$ & $\tilde{\gamma} < 0.091$ & $22.6\times$ \\
\bottomrule
\end{tabular}
\end{table}

\section{\texorpdfstring{$\beta < 1$ Regimes: Additional Details}{beta less than 1 Regimes: Additional Details}}
\label{app:beta_details}

This appendix supplements Section~\ref{sec:beta_regimes} with the full three-regime taxonomy of error-feedback dynamics, the complete $\beta$-sweep used to validate Regime~2 on GPT-2~45M, and the implicit-momentum derivation behind the prediction $\beta^*\approx 1-2\hat{\varepsilon}$. With $\nu_t$ fixed by Orth-Dion, the residual recursion $R_{t+1}=\beta(I-U_tU_t^\top)(G_t+R_t)$ is driven entirely by the cross-step correlation of the out-of-subspace gradient, summarised through the persistence $\phi_t:=\cos\angle((I{-}P_t)G_t,(I{-}P_{t-1})G_{t-1})$. The sign of $\phi_t$ determines which of three qualitatively distinct dynamical regimes the optimizer is operating in, and therefore which value of $\beta$ is optimal.

Table~\ref{tab:three_regimes} summarises the three regimes. In Regime~1 ($\phi_t>0$), gradients are coherent across steps, residuals accumulate constructively, and the tracked subspace eventually rotates to absorb them; full retention ($\beta=1$) is optimal because it preserves the coherent signal. In Regime~2 ($\phi_t\approx 0$, the typical setting for stochastic LLM pretraining), residual increments are nearly independent, so $R_T$ executes a $\sqrt{T}$ random walk that, when projected back onto the tracked subspace, behaves like Polyak momentum with effective coefficient $\hat{\varepsilon}^2$; reducing $\beta$ amplifies this implicit-momentum signal at zero additional compute. In Regime~3 ($\phi_t<0$), residuals anti-correlate across steps and the EF accumulator chases a moving target, so vanishing $\beta$ is required to suppress harmful interference.

\begin{table}[h]
\centering
\caption{Three regimes of error feedback dynamics, governed by the gradient persistence $\phi_t$.}
\label{tab:three_regimes}
\small
\begin{tabular}{lcccc}
\toprule
\textbf{Regime} & $\phi_t$ & \textbf{R-norm} & \textbf{EF effect} & \textbf{Optimal $\beta$} \\
\midrule
1.~Coherent & $> 0$ & accumulates & strongly beneficial & $\beta = 1$ \\
2.~Stochastic & $\approx 0$ & $\sqrt{T}$ random walk & implicit momentum & $\beta \approx 0.3$ \\
3.~Anti-correlated & $< 0$ & diverges & harmful & $\beta \to 0$ \\
\bottomrule
\end{tabular}
\end{table}

Table~\ref{tab:beta_sweep_full} reports the complete nine-point $\beta$-sweep on GPT-2~45M (WikiText-103, $r{=}64$, 30k steps), of which the body Table~\ref{tab:beta_sweep} is a six-point subset. All runs share identical optimizer hyperparameters apart from $\beta$, and the empirical persistence $\phi_t\approx-0.01$ places the configuration squarely in Regime~2. Validation loss decreases monotonically as $\beta$ shrinks from $1.0$, with the empirical optimum at $\beta{\approx}0.1$ (val.\ loss $2.978$ vs.\ $3.293$ at $\beta{=}1$); the R-norm ratio $\cR_t^*$ grows monotonically from $0.3$ at $\beta{=}1$ to $4.6$ at $\beta{=}0.05$, directly tracking the predicted implicit-momentum accumulation. Disabling EF entirely ($\beta{=}0$) is far worse than any $\beta{>}0$ in the sweep, indicating that the gain from $\beta{<}1$ comes from tuning the implicit-momentum coefficient rather than from removing the residual altogether.

\begin{table}[h]
\centering
\caption{Full $\beta$-sweep on GPT-2 45M/WikiText-103 ($r{=}64$, 30k steps; Regime~2, $\phi_t{\approx}0$). Validation loss is monotonically non-increasing in $\beta$ from $\beta{=}1$ down to $\beta{\approx}0.05$--$0.1$; below this the implicit-momentum mechanism saturates (Section~\ref{sec:beta_regimes}). Disabling EF entirely ($\beta{=}0$) is substantially worse than any $\beta{>}0$, confirming EF is essential even when heavily damped. Body Table~\ref{tab:beta_sweep} reports the subset \{0, 0.05, 0.1, 0.3, 0.5, 1.0\}.}
\label{tab:beta_sweep_full}
\small
\begin{tabular}{lccc}
\toprule
\textbf{$\beta$} & \textbf{Val loss} & $\Delta$ \textbf{vs.}\ $\beta{=}1$ & $\cR_t^*$ \\
\midrule
0.05 & 2.982 & $-0.311$ & 4.6 \\
0.1  & \textbf{2.978} & $-0.315$ & 3.2 \\
0.2  & 3.019 & $-0.274$ & 2.2 \\
0.3  & 3.056 & $-0.237$ & 1.7 \\
0.47 & 3.076 & $-0.217$ & 1.3 \\
0.5  & 3.111 & $-0.183$ & 1.2 \\
0.7  & 3.177 & $-0.116$ & 0.9 \\
1.0 (baseline) & 3.293 & --- & 0.3 \\
0 (no EF) & 3.554 & $+0.261$ & --- \\
\bottomrule
\end{tabular}
\end{table}

\subsection{Key Quantities}

\textbf{Tracking error:} $\hat{\varepsilon}_t := \|(I - U_tU_t^\top)M_t\|_F / \fnorm{M_t}$.
For ColNorm: $\hat{\varepsilon}_t \approx 0.35$ in LLM training.
For Orth-Dion: $\hat{\varepsilon}_t \approx 0.007$.

\textbf{Gradient subspace persistence:}
$\phi_t := \cos\angle((I{-}P_t)G_t,\; (I{-}P_{t-1})G_{t-1})$ (distinct from the per-step contraction $\rho_t$ of Section~\ref{sec:amortized}).
$\phi_t > 0$: directions coherent.
$\phi_t \approx 0$: approximately independent (LLM typical: $\phi_t \approx -0.01$).
$\phi_t < 0$: anti-aligned.

\textbf{R-norm ratio:} $\cR_t := \fnorm{R_t}/\fnorm{G_t}$.
Empirically $\cR_t \approx 0$ for $\beta{=}1$; grows monotonically as $\beta$ decreases.
Optimal sweet spot: $\cR_t^* \in [1.5, 2.0]$.

\subsection{Implicit Momentum Mechanism (Regime 2)}

When $\phi_t \approx 0$, the residual $R_T$ is a sum of approximately independent terms:
$R_T \approx \sum_{t=0}^{T-1}(I-P_t)G_t$.
By random-walk scaling: $\fnorm{R_T} \sim \sqrt{T}\cdot\hat{\varepsilon}\cdot\fnorm{G}$.

The projected residual $P_T R_T$ acts as a momentum term:
$\E[P_T R_T] \approx \hat{\varepsilon}^2 \cdot \sum_{t<T}\gamma^{T-t}\nabla f(w_t)$
for effective decay $\gamma \in (0,1)$ depending on subspace rotation rate.
This is equivalent to heavy-ball momentum with coefficient $\hat{\varepsilon}^2$.

For ColNorm ($\hat{\varepsilon} \approx 0.35$): implicit momentum $\approx 0.12$---substantial.
For Orth-Dion ($\hat{\varepsilon} \approx 0.007$): implicit momentum $\approx 5\times10^{-5}$---negligible.
\emph{This is why ColNorm+EF empirically outperforms Orth-Dion+EF even when $\phi_t \approx 0$.}

\subsection{\texorpdfstring{Experimental Evidence: EF $\neq$ Explicit Momentum}{Experimental Evidence: EF not equal to Explicit Momentum}}

PolyakDion (explicit Polyak momentum $\mu{=}0.95$, $R{\equiv}0$ by design) achieves val loss 3.162.
Dion with $\beta{=}0.3$ achieves 3.056---a gap of 0.106 nats.
The mechanism: EF accumulation reshapes $M_t = G_t + R_t$ \emph{before} the rank-$r$ compression step, directly influencing which subspace is tracked.
Explicit momentum is applied before compression and does not interact with subspace selection in the same way.



\section{LLM Pre-training Details}
\label{app:llm_details}

Our code will be available at 
\url{https://anonymous.4open.science/r/orth_dion-4DD4/README.md}.

\subsection{320M Pre-training}
\label{app:llm_details:320m}

\begin{table}[h]
    \centering
    \caption{Llama~3 320M architecture.}
    \begin{tabular}{lc}
        \toprule
        \textbf{Hyperparameter} & \textbf{Value} \\
        \midrule
        Hidden dimension & 768 \\
        Intermediate dimension (FFN) & 2048 \\
        Number of layers & 18 \\
        Number of attention heads & 12 \\
        Number of KV heads & 12 \\
        Sequence length & 2048 \\
        Vocabulary size & 128{,}256 \\
        Total parameters & $\sim$320M \\
        \bottomrule
    \end{tabular}
    \label{tab:llm_details:model}
\end{table}
\begin{table}[h]
    \centering
    \caption{Training hyperparameters (320M).}
    \begin{tabular}{lc}
        \toprule
        \textbf{Hyperparameter} & \textbf{Value} \\
        \midrule
        Token budget & 3.2B \\
        Warmup steps & 2000 \\
        LR schedule & cosine decay \\
        Gradient clipping & 1.0 \\
        Batch size & 256 \\
        GPUs & 8 $\times$ GH200 (120GB) \\
        FSDP & Full Shard, bf16 \\
        \bottomrule
    \end{tabular}
    \label{tab:llm_details:training}
\end{table}
\begin{table}[h]
\centering
\caption{Optimizer hyperparameters (320M). Values marked with $^\dagger$ were tuned via sweep. For Muon, Dion, and Ada-Orth-Dion, the scalar-parameter group (e.g., embeddings, biases, layer norms) is optimized with AdamW-style updates whose hyperparameters are reused from the best AdamW configuration.}
\begin{tabular}{lcccc}
\toprule
\textbf{Hyperparameter} & \textbf{AdamW} & \textbf{Muon} & \textbf{Dion} & \textbf{Ada-Orth-Dion} \\
\midrule
Learning rate$^\dagger$ & 0.012 & 0.016 & 0.012 & 0.012 \\
Output head LR scaling$^\dagger$ & --- & False & False & False \\
$\beta_1$ & 0.95$^\dagger$ & 0.9 & 0.9 & 0.9 \\
$\beta_2$ & 0.95 & 0.95 & 0.95 & 0.95 \\
$\epsilon$ & 1e$-$8 & 1e$-$8 & 1e$-$8 & 1e$-$8 \\
Weight decay & 0.1 & 0.1 & 0.1 & 0.1 \\
Momentum factor & --- & 0.95 & --- & --- \\
$\gamma$ & --- & --- & --- & 1.425 \\
\midrule
Scalar LR & --- & 0.016 & 0.012 & 0.012 \\
Scalar $\beta_1$ & --- & 0.95 & 0.95 & 0.95 \\
Scalar $\beta_2$ & --- & 0.95 & 0.95 & 0.95 \\
Scalar $\epsilon$ & --- & 1e$-$8 & 1e$-$8 & 1e$-$8 \\
Scalar weight decay & --- & 0.1 & 0.1 & 0.1 \\
\bottomrule
\end{tabular}
\label{tab:llm_details:optimizer}
\end{table}
In this paper, we used torchtitan stack to train a Llama 3 architecture model, using C4 data, on the Miyabi-G supercomputer at JCAHPC, with each run using 8 GH200 (120GB) GPUs distributed across 8 nodes interconnected via InfiniBand NDR200.
The number of tokens (3.2B) for the training was calculated based on the Chinchilla optimal heuristic.
For Dion, we used the microsoft implementation~\cite{ahn2025dion}, and our proposed Dion variants extended that implementation.
For AdamW, we swept the joint grid of learning rate $\{0.004, 0.008, 0.012, 0.016\}$ and $\beta_1$ $\{0.9, 0.925, 0.95, 0.975\}$. For Muon and Dion, we swept the joint grid of learning rate $\{0.004, 0.008, 0.012, 0.016\}$ and output head LR scaling $\{\text{True}, \text{False}\}$, where the latter controls whether a shape-dependent rescaling is applied to the learning rate of the final output projection (whose fan-out, the vocabulary size, is much larger than that of the hidden matrices). We reused Dion's best-performing configuration for our proposed methods without further tuning.
For Ada-Orth-Dion, we set the maximum rank at $r_{max}=384$ and initialized the rank at $r_{max}$ to conduct comparison with Dion at $r=384$.
Table~\ref{tab:llm_details:model}, Table~\ref{tab:llm_details:training}, and Table~\ref{tab:llm_details:optimizer} list the model architecture, training hyperparameters, and optimizer hyperparameters used for the experiments, respectively.
A single 320M run (6{,}100 steps, 3.2B tokens) takes 58--76 minutes on 8\,$\times$\,GH200 depending on the optimizer, with peak VRAM of ${\sim}40$\,GB/GPU under FSDP full-shard.

\subsection{Wide-and-Shallow Large-Scale Wall-Clock Study}
\label{app:llm_details:wide}

\begin{table}[h]
    \centering
    \caption{\texttt{w16k\_l4} architecture.}
    \begin{tabular}{lc}
        \toprule
        \textbf{Hyperparameter} & \textbf{Value} \\
        \midrule
        Hidden dimension & 16{,}384 \\
        Intermediate dimension (FFN) & 43{,}776 \\
        Number of layers & 4 \\
        Number of attention heads & 64 \\
        Number of KV heads & 64 \\
        Head dimension & 256 \\
        Sequence length & 2048 \\
        Vocabulary size & 128{,}256 \\
        Total parameters & $\sim$17.1B \\
        \bottomrule
    \end{tabular}
    \label{tab:llm_details:wide_model}
\end{table}

\begin{table}[h]
    \centering
    \caption{Training hyperparameters for the \texttt{w16k\_l4} wall-clock study.}
    \begin{tabular}{lc}
        \toprule
        \textbf{Hyperparameter} & \textbf{Value} \\
        \midrule
        Batch size & 8 \\
        Sequence length & 2048 \\
        Measurement steps & 50 (mean of steps 2--50) \\
        GPUs & 8 $\times$ A40 (44.45GB) \\
        FSDP & Full Shard, bf16 \\
        \bottomrule
    \end{tabular}
    \label{tab:llm_details:wide_training}
\end{table}

To examine wall-clock behavior in the bandwidth-bound regime, we additionally measured per-step time on a wide-and-shallow Llama~3 architecture (referred to as \texttt{w16k\_l4}) with approximately 17.1B parameters (Figure~\ref{fig:fig1_wallclock}).
Table~\ref{tab:llm_details:wide_model} and Table~\ref{tab:llm_details:wide_training} list the model architecture and training used for the wall-clock study.
Each 50-step measurement run takes approximately 40\,minutes on 8\,$\times$\,A40 (48\,GB GDDR6 per GPU, ${\sim}44.5$\,GB usable after framework overhead), with peak VRAM of ${\sim}43$\,GB/GPU under FSDP full-shard.
Aside from the rank fractions set to different values, we reused the best-performing configurations from the 320M sweep for optimizer hyperparameters.
The model architecture is intentionally wide and shallow so that the per-block matrices (e.g., FFN matrices of shape $43{,}776 \times 16{,}384$) are large enough for NCCL collectives to transition from the latency-bound regime to the bandwidth-bound regime. 
The 8\,$\times$\,A40 hardware was chosen for the same reason: A40s have substantially lower inter-GPU bandwidth than current-generation training accelerators (e.g., H100s with NVLink), so the bandwidth-bound regime dominates at smaller collective payload sizes than it would on a higher-end system, making the wall-clock differences between communication-light and communication-heavy optimizers more visible.
In this regime, the reduced communication volume of low-rank Dion-family updates relative to Muon's full-rank Newton--Schulz updates translates directly into wall-clock savings.

For Ada-Orth-Dion, we executed the full adaptive rank algorithm, including the per-block effective rank computation and bookkeeping, but overrode the resulting rank assignment so that the actual working rank was always held fixed at $\rho = 0.93 \times 0.25 = 0.2325$, i.e., 93\% of Dion's fixed rank fraction, regardless of the computed effective rank or the schedule's shrink/grow decision. 
The 93\% factor was chosen to match the average effective rank that Ada-Orth-Dion's adaptive schedule settled at in the 320M experiment, so the wide-model setting reflects its empirically observed steady-state. Since the run still goes through the full Ada-Orth-Dion code path (sort-and-pack batching, per-block adaptive bookkeeping, Cholesky-QR), any algorithmic and implementation overhead inherent to Ada-Orth-Dion is reflected in the measurement, while holding the rank constant isolates whether the rank reduction enabled by the adaptive mechanism is sufficient to recover this overhead and yield a net wall-clock improvement over fixed-rank Orth-Dion or Dion.
\newpage
\section*{NeurIPS Paper Checklist}

\begin{enumerate}

\item {\bf Claims}
    \item[] Question: Do the main claims made in the abstract and introduction accurately reflect the paper's contributions and scope?
    \item[] Answer: \answerYes{}
    \item[] Justification: The introduction (Section~\ref{sec:intro}) explicitly enumerates the four contributions (geometric diagnosis of the $\sqrt{r}$ Ky~Fan dual-norm inflation in Dion; Orth-Dion as a one-line ColNorm$\to$QR fix; convergence theory under non-Euclidean smoothness with self-consistent residual control and amortized contraction; and Ada-Orth-Dion at LLM scale) and the scope is constrained to the empirical regimes actually studied (Llama~3 320M pretraining on C4, GPT-2 45M on WikiText-103 for the $\beta$-sweep, and a Llama~3 17.1B wide-and-shallow wall-clock measurement).

\item {\bf Limitations}
    \item[] Question: Does the paper discuss the limitations of the work performed by the authors?
    \item[] Answer: \answerYes{}
    \item[] Justification: Section~\ref{sec:conclusion} (``Limitations'' paragraph) states that the convergence analysis assumes deterministic gradients (the stochastic extension requires bounding mini-batch noise/subspace-tracking interaction) and that the adaptive-rank policy currently uses a heuristic effective-rank estimator rather than a provably-tight estimate of the contraction-condition critical rank. Section~\ref{sec:exp_beta} additionally notes that the $\beta$-sweep is a single-seed study, and Section~\ref{sec:exp_wallclock_17b} notes that the 17.1B wall-clock study fixes Ada-Orth-Dion's working rank to its 320M steady-state value rather than running the adaptive schedule.

\item {\bf Theory assumptions and proofs}
    \item[] Question: For each theoretical result, does the paper provide the full set of assumptions and a complete (and correct) proof?
    \item[] Answer: \answerYes{}
    \item[] Justification: Assumptions A, B$'$, C$'$ are stated in Section~\ref{sec:geometry} and elaborated in Appendix~\ref{app:assumptions}. The one-step descent inequality is proved in Appendix~\ref{app:one_step}; the Orth-Dion structural lemmas (Lemmas~\ref{lem:partial_isometry}--\ref{lem:same_tracking}, Proposition~\ref{prop:colnorm_sqrt_r}) in Appendix~\ref{app:orth_dion_proofs}; the self-consistent residual fixed point in Appendix~\ref{app:self_consistent_proof}; the amortized-contraction tracking-error bound in Appendix~\ref{app:amortized_proof}; the main convergence theorem (Theorem~\ref{thm:orth_dion}) in Appendix~\ref{app:main_proof}; and the $\beta$-regime derivations in Appendix~\ref{app:beta_details}.

\item {\bf Experimental result reproducibility}
    \item[] Question: Does the paper fully disclose all the information needed to reproduce the main experimental results of the paper to the extent that it affects the main claims and/or conclusions of the paper (regardless of whether the code and data are provided or not)?
    \item[] Answer: \answerYes{}
    \item[] Justification: Section~\ref{sec:exp_llm} specifies the 320M experimental setup (model, dataset, token budget, hardware, FSDP configuration, baselines, schedule, $\beta$). Appendix~\ref{app:llm_details} lists the full Llama~3 320M architecture (Table~\ref{tab:llm_details:model}), training hyperparameters (Table~\ref{tab:llm_details:training}), per-optimizer hyperparameters and the LR / output-head-scaling sweep grid (Table~\ref{tab:llm_details:optimizer}), the Dion implementation reused, and the upstream Microsoft Dion implementation our variants extend. Appendix~\ref{app:llm_details:wide} mirrors this for the 17.1B wall-clock study (architecture, sequence length, batch size, FSDP/precision, hardware rationale, and the rank-pinning protocol used for Ada-Orth-Dion). The $\beta$-sweep configuration (model, dataset, $r$, step count, schedule) is given in Section~\ref{sec:exp_beta} and Table~\ref{tab:beta_sweep}. Algorithms~\ref{alg:orth_dion} (Orth-Dion) and~\ref{alg:rank_policy} (adaptive-rank policy, Appendix~\ref{app:ada_dion_algo}) specify the methods at the line-of-pseudocode level.

\item {\bf Open access to data and code}
    \item[] Question: Does the paper provide open access to the data and code, with sufficient instructions to faithfully reproduce the main experimental results, as described in supplemental material?
    \item[] Answer: \answerYes{}
    \item[] Justification: An anonymized code release accompanying the submission contains the Orth-Dion / Ada-Orth-Dion implementation (built on top of the public Microsoft Dion implementation~\cite{ahn2025dion} and the torchtitan training stack), launch scripts, configuration files for the 320M, 17.1B, and GPT-2 45M experiments, and the analysis scripts that produce the $\nu_t$ measurements and figures. The dataset (C4) and tokenizer are public; the training corpus is loaded directly from the standard release. We do not redistribute pretrained weights.

\item {\bf Experimental setting/details}
    \item[] Question: Does the paper specify all the training and test details (e.g., data splits, hyperparameters, how they were chosen, type of optimizer, etc.) necessary to understand the results?
    \item[] Answer: \answerYes{}
    \item[] Justification: Section~\ref{sec:experiments} (subsections~\ref{sec:exp_llm}, \ref{sec:exp_sqrt_r}, \ref{sec:exp_wallclock_17b}, \ref{sec:exp_beta}) reports the per-experiment training/eval configuration. Appendix~\ref{app:llm_details} provides the full set of hyperparameters and the sweep grids used for each optimizer (LR, $\beta_1$, output-head LR scaling), and explicitly states that the proposed methods reuse Dion's best-tuned configuration without further tuning so that Dion vs.\ Orth-Dion comparisons are hyperparameter-controlled. Validation is on the C4 validation split (320M) and the standard WikiText-103 split (GPT-2 45M).

\item {\bf Experiment statistical significance}
    \item[] Question: Does the paper report error bars suitably and correctly defined or other appropriate information about the statistical significance of the experiments?
    \item[] Answer: \answerYes{}
    \item[] Justification: Table~\ref{tab:llm_main} and Figure~\ref{fig:fig1_val_loss} report mean $\pm$ 2 std over $3$ seeds for the 320M Llama~3 results. Figure~\ref{fig:nu_bar} reports mean over $3$ Dion seeds with a $\pm$1~std band, and Figure~\ref{fig:nu_hist} pools over (layer, step, seed). Figure~\ref{fig:fig1_wallclock} (and the 17.1B wall-clock plots in Appendix~\ref{app:llm_details:wide}) report mean $\pm$ 2 std across the measured per-step times within each run. The GPT-2 45M $\beta$-sweep (Table~\ref{tab:beta_sweep}) is single-seed due to compute constraints; this is acknowledged as a limitation, and the $\beta$-vs.-$\cR_t^*$ trend is monotone over a wide range of $\beta$, which we report as evidence of a genuine effect rather than seed noise.

\item {\bf Experiments compute resources}
    \item[] Question: For each experiment, does the paper provide sufficient information on the computer resources (type of compute workers, memory, time of execution) needed to reproduce the experiments?
    \item[] Answer: \answerYes{}
    \item[] Justification: Appendix~\ref{app:llm_details:320m} states that each 320M run (6{,}100 steps, 3.2B tokens) takes 58--76~minutes on $8\times$GH200 (120GB) interconnected via InfiniBand NDR200, with peak ${\sim}40$~GB/GPU under FSDP full-shard. Appendix~\ref{app:llm_details:wide} states that each 50-step 17.1B measurement run takes ${\sim}40$~minutes on $8\times$A40 (44.45~GB usable) under FSDP full-shard, with peak ${\sim}43$~GB/GPU, and explains the choice of A40 hardware (lower inter-GPU bandwidth) for the bandwidth-bound regime study. The GPT-2 45M $\beta$-sweep configuration ($r{=}64$, 30k steps) is given in Section~\ref{sec:exp_beta}.

\item {\bf Code of ethics}
    \item[] Question: Does the research conducted in the paper conform, in every respect, with the NeurIPS Code of Ethics?
    \item[] Answer: \answerYes{}
    \item[] Justification: This work studies optimizer geometry for distributed pretraining and does not involve human subjects, sensitive data, deception, dual-use models, or deployment risks beyond those already present in standard language-model pretraining.

\item {\bf Broader impacts}
    \item[] Question: Does the paper discuss both potential positive societal impacts and negative societal impacts of the work performed?
    \item[] Answer: \answerNA{}
    \item[] Justification: The contribution is a low-rank distributed optimizer for matrix-shaped parameters. Like other algorithmic improvements to pretraining throughput, any societal impact is mediated entirely by the downstream models that practitioners may train more efficiently as a result, and is not specific to this method.

\item {\bf Safeguards}
    \item[] Question: Does the paper describe safeguards that have been put in place for responsible release of data or models that have a high risk for misuse?
    \item[] Answer: \answerNA{}
    \item[] Justification: The paper does not release pretrained weights or new datasets. The released code is an optimizer implementation and supporting analysis scripts.

\item {\bf Licenses for existing assets}
    \item[] Question: Are the creators or original owners of assets used in the paper properly credited and are the license and terms of use explicitly mentioned and properly respected?
    \item[] Answer: \answerYes{}
    \item[] Justification: All upstream assets are cited at point of use: the C4 dataset~\citep{raffel2020t5}, the Llama~3 architecture~\citep{grattafiori2024llama}, the Chinchilla token-budget rule~\citep{hoffmann2022chinchilla}, FSDP~\citep{zhao2023pytorch}, Muon~\citep{jordan2024muon}, and the Dion algorithm and Microsoft implementation we extend~\citep{ahn2025dion}, along with related baselines (PowerSGD~\citep{vogels2019powersgd}, GaLore~\citep{zhao2024galore}, ReLoRA~\citep{lialin2023relora}, Shampoo~\citep{gupta2018shampoo}, SOAP~\citep{vyas2024soap}). The torchtitan training stack used for the 320M and 17.1B runs is acknowledged in Appendix~\ref{app:llm_details:320m}. We use these assets within their stated open-source licenses.

\item {\bf New assets}
    \item[] Question: Are new assets introduced in the paper well documented and is the documentation provided alongside the assets?
    \item[] Answer: \answerYes{}
    \item[] Justification: The accompanying anonymized code release contains the Orth-Dion and Ada-Orth-Dion optimizer implementations together with launch scripts, configuration files, and the analysis pipeline that generates the reported figures and tables, with a README documenting environment requirements and entry points.

\item {\bf Crowdsourcing and research with human subjects}
    \item[] Question: For crowdsourcing experiments and research with human subjects, does the paper include the full text of instructions given to participants?
    \item[] Answer: \answerNA{}
    \item[] Justification: The paper does not involve crowdsourcing or human subjects.

\item {\bf Institutional review board (IRB) approvals or equivalent for research with human subjects}
    \item[] Question: Does the paper describe potential risks incurred by study participants?
    \item[] Answer: \answerNA{}
    \item[] Justification: The paper does not involve human subjects.

\item {\bf Declaration of LLM usage}
    \item[] Question: Does the paper describe the usage of LLMs if it is an important, original, or non-standard component of the core methods in this research?
    \item[] Answer: \answerNA{}
    \item[] Justification: LLMs (Llama~3, GPT-2) appear only as the training targets used to evaluate the proposed optimizer; they are not a component of the method, and no LLM is part of the algorithmic pipeline being analyzed.

\end{enumerate}

\end{document}